\newcommand{\BEAS}{\begin{eqnarray*}}
\newcommand{\EEAS}{\end{eqnarray*}}
\newcommand{\BEA}{\begin{eqnarray}}
\newcommand{\EEA}{\end{eqnarray}}
\newcommand{\BEQ}{\begin{equation}}
\newcommand{\EEQ}{\end{equation}}
\newcommand{\BIT}{\begin{itemize}}
\newcommand{\EIT}{\end{itemize}}
\newcommand{\BNUM}{\begin{enumerate}}
\newcommand{\ENUM}{\end{enumerate}}
\newcommand{\BA}{\begin{array}}
\newcommand{\EA}{\end{array}}
\newcommand{\rb}{\mathbb{R}}
\newcommand{\lova}{Lov\'asz }
\newtheorem{proposition}{Proposition}
\newcommand{\mysec}[1]{Section~\ref{sec:#1}}
\newcommand{\eq}[1]{Eq.~(\ref{eq:#1})}
\newcommand{\union}{\cup}
\newcommand{\inter}{\cap}
\newtheorem{theorem}{Theorem}
\newtheorem{lemma}[theorem]{Lemma}
\theoremstyle{definition}
\newcommand{\pnorm}[2]{\| {#1} \|_{#2}}
\newcommand{\enorm}[1]{\pnorm{#1}{2}}
\newcommand{\nlsum}{\sum\nolimits}
\newcommand{\Hc}{\mathcal{H}}
\newcommand{\reals}{\mathbb{R}}
\newcommand{\Ac}{\mathcal{A}}
\newcommand{\Bc}{\mathcal{B}}
\newcommand{\half}{\tfrac{1}{2}}
\newcommand{\set}[1]{\lbrace #1 \rbrace}
\DeclareMathOperator{\relint}{ri}
\DeclareMathOperator*{\argmin}{argmin}
\DeclareMathOperator*{\argmax}{argmax}
\DeclareMathOperator{\dom}{dom}
\DeclareMathOperator{\id}{I}
\DeclareMathOperator{\prox}{prox}
\title{Reflection methods for user-friendly \\ submodular optimization
 }
\author{
Stefanie Jegelka\\
UC Berkeley\\
Berkeley, CA, USA\\
\And
Francis Bach \\
INRIA - ENS\\
Paris, France\\
\And
Suvrit Sra\\
MPI for Intelligent Systems\\
T\"ubingen, Germany\\
}
\begin{document}

\maketitle

\begin{abstract}
  Recently, it has become evident that submodularity naturally captures widely occurring concepts in machine learning, signal processing and computer vision. Consequently, there is need for efficient optimization procedures for submodular functions, especially for minimization problems.
  While general submodular minimization is challenging, we propose a new method that exploits existing decomposability of submodular functions. In contrast to previous approaches, our method is neither approximate, nor impractical, nor does it need any cumbersome parameter tuning. Moreover, it is easy to implement and parallelize.
  A key component of our method is a formulation of the discrete submodular minimization problem as a continuous best approximation problem that is solved through a sequence of reflections, and its solution can be easily thresholded to obtain an optimal discrete solution. 
  This method solves \emph{both} the continuous and discrete formulations of the problem, and therefore has applications in learning, inference, and reconstruction.
 In our experiments, we illustrate the benefits of our method on two image segmentation tasks.
 
 \end{abstract}

\section{Introduction}
Submodularity is a rich combinatorial concept that expresses widely occurring phenomena such as diminishing marginal costs and preferences for grouping. A set function $F: 2^{V} \to \mathbb{R}$ on a set $V$ is \emph{submodular} if for all subsets $S, T \subseteq V$, we have $F(S \union T) + F(S \inter T) \leq F(S) + F(T)$. 

Submodular functions underlie the goals of numerous problems in machine learning, computer vision and signal processing~\cite{bach2011learning}. Several problems in these areas can be phrased as submodular optimization tasks: notable examples include graph cut-based image segmentation~\cite{boykov2001fast}, sensor placement~\cite{krause11submodularity}, or document summarization~\cite{lin2011-class-submod-sum}. A longer list of examples may be found in~\cite{bach2011learning}. 

The theoretical complexity of submodular optimization is  well-understood: unconstrained minimization of submodular set functions is polynomial-time~\cite{fujishige2005submodular} while submodular maximization is NP-hard.
Algorithmically, however, the picture is different. Generic submodular maximization admits efficient algorithms that can attain \emph{approximate} optima with global guarantees; these algorithms are typically based on local search techniques~\cite{nemhauser1978analysis,feige2007maximizing}. 
In contrast, although polynomial-time solvable, submodular function minimization (SFM) which seeks to solve
\BEQ
\label{eq:sfm}
\min_{S \subseteq V} F(S),
\EEQ
poses substantial algorithmic difficulties. This is partly due to the fact that one is commonly interested in an exact solution (or an arbitrarily close approximation thereof), and ``polynomial-time'' is not necessarily equivalent to ``practically fast''.

Submodular minimization algorithms may be obtained from two main perspectives: \emph{combinatorial} and \emph{continuous}. Combinatorial algorithms for SFM typically use close connections to matroid and maximum flow methods; the currently theoretically fastest  combinatorial algorithm for SFM scales as $O(n^6 + n^5\tau)$, where $\tau$ is the time to evaluate the function oracle \cite{orlin2009faster} (for an overview of other algorithms, see e.g.,~\cite{mccormick2005submodular}). These combinatorial algorithms are typically nontrivial to implement.

Continuous methods offer an alternative by instead minimizing a \emph{convex extension}. This idea exploits the fundamental connection between a submodular function $F$ and its  \emph{Lov\'asz extension} $f$~\cite{lovasz1982submodular}, which is continuous and convex. The SFM problem~\eqref{eq:sfm} is then equivalent to 
\BEQ
\label{eq:min}
\min_{x \in [0,1]^n}f(x).
\EEQ
The Lov\'asz extension $f$ is nonsmooth, so we might have to resort to subgradient methods. While a fundamental result of Edmonds \citep{edmonds} demonstrates that a subgradient of $f$ can be computed in $O(n\log n)$ time, subgradient methods can be sensitive to choices of the step size, and can be slow. They theoretically converge at a rate of $O(1/\sqrt{t})$ (after $t$ iterations). 
The ``smoothing technique'' of \citep{nesterov2005smooth} does not in general apply here 
because computing a smoothed gradient is equivalent to solving the submodular minimization problem. We discuss this issue further in \mysec{review}.

An alternative to minimizing the \lova extension directly on $[0,1]^n$
is to consider a slightly modified convex problem. Specifically, the exact solution of the discrete problem $\min_{S \subseteq V} F(S)$ and of its nonsmooth convex relaxation $\min_{x \in [0,1]^n}f(x)$ may be found as a level set $S_0 = \{ k \mid x^\ast_k \geqslant 0\}$ of the unique point $x^*$ that minimizes the strongly convex function~\cite{chambolle2009total,bach2011learning}:
\begin{equation}
  \label{eq:smoothmin}
  f(x) + \half\|x\|^2.
\end{equation}
 We will refer to the minimization of~\eqref{eq:smoothmin} as the \emph{proximal} problem due to its close similarity to proximity operators used in convex optimization~\cite{comPes11}. When $F$ is a cut function, \eqref{eq:smoothmin} becomes a total variation problem (see, e.g.,~\cite{chambolle2004algorithm} and references therein) that also occurs in other regularization problems~\cite{bach2011learning}. Two noteworthy points about~\eqref{eq:smoothmin} are: (i) addition of the  strongly convex component $\half\|x\|^2$; (ii) the ensuing removal of the \emph{box-constraints} $x\in[0,1]^n$. These changes allow us to consider a convex dual which is amenable to smooth optimization techniques. 

Typical approaches to generic SFM include Frank-Wolfe methods \cite{frankwolfe56} that have cheap iterations and $O(1/t)$ convergence, but can be quite slow in practice (\mysec{experiments}); or the minimum-norm-point/Fujishige-Wolfe algorithm \cite{fujishigeIsotani11} that has expensive iterations but finite convergence. Other recent methods are approximate \cite{jegelkaLB11}. In contrast to several iterative methods based on convex relaxations, 
we seek to obtain exact discrete solutions.  

To the best of our knowledge, all generic algorithms that use only submodularity are several orders of magnitude slower than specialized algorithms when they exist (e.g., for graph cuts). However, the submodular function is not always generic and given via a black-box, but has known structure. Following~\cite{komodakis2011mrf,kolmogorov12,stobbe11,savchynskyy11}, we make the assumption that $F(S) = \sum_{i=1}^r F_i(S)$ is a sum of sufficiently ``simple'' functions (see Sec.~\ref{sec:dec}). 
This structure allows the use of (parallelizable) dual decomposition techniques for the problem in \eq{min}, with~\cite{savchynskyy11,chudak07} or without~\cite{komodakis2011mrf} Nesterov's smoothing technique, or with direct smoothing~\cite{stobbe11} techniques. 
 But existing approaches typically have two drawbacks: (a) they use smoothing or step-size parameters whose selection may be critical and quite tedious; and (b) they still exhibit slow convergence (see \mysec{simulations}).
 
These drawbacks arise from working with formulation \eqref{eq:min}. Our main insight is that, despite seemingly counter-intuitive, the proximal problem \eqref{eq:smoothmin} offers a much more user-friendly tool for solving \eqref{eq:sfm} than its natural convex counterpart~\eqref{eq:min}, both in implementation and running time. We approach Problem \eqref{eq:smoothmin} via its dual. This allows decomposition techniques which combine well with orthogonal projection and reflection methods that (a) exhibit faster convergence, (b) are easily parallelizable, (c) require no extra hyperparameters, and (d) are extremely easy to implement. 

The main three algorithms that we consider are: (i) dual block-coordinate descent (equivalently, primal-dual proximal-Dykstra), which was already shown to be extremely efficient for total variation problems~\cite{barbero2011fast} that are special cases of Problem~(\ref{eq:smoothmin}); (ii) Douglas-Rachford splitting using the careful variant of~\cite{bauschke2004finding}, which  for our formulation (Section~\ref{sec:DR}) requires no hyper-parameters; and (iii) accelerated projected gradient~\cite{fista}.
We will see these alternative algorithms can offer speedups beyond known efficiencies. Our observations have two implications: first, from the viewpoint of solving Problem \eqref{eq:smoothmin}, they offers speedups for often occurring denoising and reconstruction problems that employ total variation. Second, our experiments suggest that projection and reflection methods can work very well for solving the combinatorial problem~\eqref{eq:sfm}.  

In summary, we make the following contributions:
 
(1) In \mysec{dec}, we cast the problem of minimizing decomposable submodular functions as an orthogonal projection problem and show how existing optimization techniques may be brought to bear on this problem, to obtain fast, easy-to-code and easily parallelizable algorithms. In addition, we show examples of classes of functions amenable to our approach. In particular, for \emph{simple} functions, i.e., those for which minimizing $F(S) - a(S)$ is easy for all vectors\footnote{Every vector $a \in \mathbb{R}^n$ may be viewed as a modular (linear) set function: $a(S) \triangleq \sum_{i \in S}a(i)$. } $a \in \rb^n$, the problem in \eq{smoothmin} may be solved in $O(\log \frac{1}{\varepsilon})$ calls to such minimization routines, to reach a precision $\varepsilon$ (\mysec{review}, \ref{sec:dec}, Appendix~\ref{sec:divconquer}). 
%
(2) In \mysec{experiments},  we  demonstrate the empirical gains of using accelerated proximal methods, Douglas-Rachford and block coordinate descent methods over existing approaches: fewer hyperparameters and faster convergence.

\vspace*{-4pt}
\section{Review of relevant results from submodular analysis}
\label{sec:review}
\vspace*{-4pt}
The relevant concepts we review here are the \lova extension, base polytopes of submodular functions, and relationships between proximal and discrete problems.
For more details, see~\cite{fujishige2005submodular,bach2011learning}.

\paragraph{\lova extension and convexity.} 
The power set $2^V$ may be naturally identified with the vertices of the hypercube, i.e., $\{0,1\}^n$. The \lova extension $f$ of any set function is defined by linear interpolation, so that for any $S \subset V$, $F(S) = f(1_S)$. It may be computed in closed form once the components of $x$ are sorted:
if  $x_{\sigma(1)} \geqslant \cdots \geqslant x_{\sigma(n)}$, then $f(x) = \sum_{k=1}^n x_{\sigma(k)} \big[ F( \{ \sigma(1),\dots, \sigma(k) \} ) -  F( \{ \sigma(1),\dots, \sigma(k-1) \} ) \big]$ \cite{lovasz1982submodular}. 
For the graph cut function, $f$ is the total variation.

In this paper, we are going to use two important results: (a) if the set function $F$ is submodular, then its \lova extension $f$ is convex, and (b) minimizing the set function $F$ is equivalent to minimizing $f(x)$ with respect to $x \in [0,1]^n$. Given $x \in [0,1]^n$, all of its level sets may be considered and the function may be evaluated (at most $n$ times) to obtain a set $S$. 
Moreover, for a submodular function, the \lova extension happens to be the support function of the base polytope $B(F)$ defined as
$$
B(F) = \{ y \in \rb^n \mid  \ \forall S \subset V, \  y(S) \leqslant F(S)   \mbox{ and }  \ y(V) = F(V)  \},
$$
that is $f(x) = \max_{ y\in B(F)} y^\top x$ \cite{edmonds}. A maximizer of $y^\top x$ (and hence the value of $f(x)$), may be computed by the ``greedy algorithm'', which first sorts the components of $w$ in decreasing order $x_{\sigma(1)} \geqslant \cdots \geqslant x_{\sigma(n)}$, and then compute $y_{\sigma(k)} = F( \{ \sigma(1),\dots, \sigma(k) \} ) -  F( \{ \sigma(1),\dots, \sigma(k-1) \} ) $. In other words, a linear function can be maximized over $B(F)$ in time $O(n\log n + n \tau)$ (note that the term $n\tau$ may be improved in many special cases). This is crucial for exploiting convex duality.

\vspace*{-4pt}
\paragraph{Dual of discrete problem.}
We may derive a dual problem to the discrete problem in \eq{sfm} and the convex nonsmooth problem in \eq{min}, as follows:
\begin{equation}
\label{eq:dual}
\min_{S \subseteq V } F(S) = 
\min_{x \in [0,1]^n} f(x)  
=  \min_{x \in [0,1]^n} \max_{y \in B(F)} y^\top x 
=  \max_{y\in B(F)}   \min_{x \in [0,1]^n} y^\top x  
=  \max_{y \in B(F)}  (y)_-(V),
\end{equation}
where $(y)_{-} = \min\{y,0\}$ applied elementwise.
This allows to obtain dual certificates of optimality from any $y\in B(F)$ and $x \in [0,1]^n$.

\vspace*{-4pt}
\paragraph{Proximal problem.} The optimization problem \eqref{eq:smoothmin}, i.e., $\min_{x \in \rb^n}  f(x) + \frac{1}{2}\|x\|^2$, has intricate relations to the SFM problem~\cite{chambolle2009total}. Given the unique optimal solution $x^*$ of \eqref{eq:smoothmin}, the maximal (resp. minimal) optimizer of the SFM problem is the set $S^*$ of nonnegative (resp. positive) elements of~$x^*$.
More precisely, solving~\eqref{eq:smoothmin} is equivalent to minimizing $F(S)  + \mu |S|$ for all $\mu \in \rb$. A solution $S_\mu \subseteq V$ is obtained from a solution $x^\ast$ as $S_\mu^\ast = \{i\mid x^\ast_i \geqslant \mu\}$. Conversely, $x^\ast$ may be obtained from all $S_\mu^\ast$ as $x_k^\ast = \sup \{ \mu \in \rb \mid \ k \in S_\mu^\ast \}$ for all $k \in V$. Moreover, if $x$ is an $\varepsilon$-optimal solution of \eq{smoothmin}, then we may construct $\sqrt{\varepsilon n}$-optimal solutions for all $S_\mu$~\cite[Prop.~10.5]{bach2011learning}. In practice, the duality gap of the discrete problem is usually much lower than that of the proximal version of the same problem, as we will see in \mysec{experiments}. Note that the problem in \eq{smoothmin} provides much more information than \eq{min}, as all $\mu$-parameterized discrete problems are solved. 

The dual problem of Problem \eqref{eq:smoothmin} reads as follows:
$$
\min_{x \in \rb^n} f(x)   + \tfrac{1}{2} \| x \|_2^2
=  \min_{x \in \rb^n} \max_{y \in B(F)} y^\top x   + \tfrac{1}{2} \| x\|_2^2 
=  \max_{y \in B(F)}   \min_{x \in \rb^n} y^\top x  + \tfrac{1}{2} \| x\|_2^2 
=  \max_{y \in B(F)}  -\tfrac{1}{2} \| y  \|_2^2 ,
$$
where primal and dual variables are linked as $x = -y$. Observe that this dual problem is equivalent to finding the orthogonal projection of $0$ onto $B(F)$.

\paragraph{Divide-and-conquer strategies for the proximal problems.}
Given a solution $x^\ast$ of the proximal problem, we have seen how to get $S_\mu^\ast$ for any $\mu$ by simply thresholding $x^\ast$ at $\mu$. Conversely, one can recover $x^\ast$ exactly from at most $n$ well-chosen values of $\mu$. A known divide-and-conquer strategy \cite{groenevelt1991two,fujishige2005submodular} hinges upon the fact that for any $\mu$, one can easily see which components of $x^\ast$ are greater or smaller than $\mu$ by computing $S^\ast_\mu$. The resulting algorithm makes $O(n)$ calls to the submodular function oracle. In Appendix \ref{sec:divconquer}, we extend an alternative approach by \citet{tarjan2006balancing} for cuts to general submodular functions and obtain a solution to \eqref{eq:smoothmin} up to precision $\varepsilon$ in $O(\min\{n, \log \frac{1}{\varepsilon}\})$ iterations. This result is particularly useful if our function $F$ is a sum of functions for each of which by itself the SFM problem is easy. Beyond squared $\ell_2$-norms, our algorithm equally applies to computing all minimizers of $f(x) + \sum_{j=1}^p h_j(x_j)$ for arbitrary smooth strictly convex functions $h_j$, $j=1,\dots,n$.

\vspace*{-4pt}
\section{Decomposition of submodular functions}
\label{sec:dec}
\vspace*{-4pt}
Following~\cite{komodakis2011mrf,kolmogorov12,stobbe11,savchynskyy11},
we assume that our  function $F$ may be decomposed as the sum $F(S) = \sum_{j=1}^r F_j(S)$ of $r$ ``simple'' functions.
In this paper, by ``simple'' we mean functions $G$ for which $G(S) - a(S)$ can be minimized efficiently for all vectors $a \in \rb^n$ (more precisely, we require that $S \mapsto G(S \cup T) - a(S)$ can be minimized efficiently over all subsets of $V\setminus T$, for any $T \subseteq V$ and~$a$). Efficiency may arise from the functional form of $G$, or from the fact that $G$ has small support.
For such functions, Problems \eqref{eq:sfm} and \eqref{eq:smoothmin} become
\begin{equation}
 \min_{S \subseteq V} \nlsum_{j=1}^r F_j(S) = \min_{x \in [0,1]^n}  \nlsum_{j=1}^r f_j(x) \quad\qquad
  \min_{x \in \rb^n} \nlsum_{j=1}^r f_j(x) + \half \| x\|_2^2.
\label{eq:2}
\end{equation}
The key to the algorithms presented here is to be able to minimize $\half \| x  - z \|_2^2 + f_j(x)$, or equivalently, to orthogonally project $z$ onto $B(F_j)$: 
 $\min \half \| y - z\|_2^2$ subject to $y \in B(F_j)$.


We next sketch some examples of functions $F$ and their decompositions into simple functions~$F_j$. As shown at the end of \mysec{review}, projecting onto $B(F_j)$ is easy as soon as the corresponding submodular minimization problems are easy. Here we outline some cases for which specialized fast algorithms are known.
 
\textbf{Graph cuts.}
A widely used class of submodular functions are graph cuts. Graphs may be decomposed into substructures such as trees, simple paths or single edges. Message passing algorithms apply to trees, while the proximal problem for paths is very efficiently solved by \cite{barbero2011fast}. 
For single edges, it is solvable in closed form.
Tree decompositions are common in graphical models, whereas path decompositions are frequently used for TV problems \cite{barbero2011fast}.\\[4pt]
\textbf{Concave functions.} Another important class of submodular functions is that of concave functions of cardinality, i.e., $F_j(S) = h(|S|)$ for a concave function $h$. Problem~\eqref{eq:smoothmin} for such functions may be solved in $O(n\log n)$ time (see \cite{fujishige80} and Appendix~\ref{sec:divconquer}). Functions of this class have been used in \cite{stobbe11,jegelkaLB11,kohli09}. Such functions also include covering functions \cite{stobbe11}.\\[4pt]
\textbf{Hierarchical functions.} Here, the ground set corresponds to the leaves of a rooted, undirected tree. Each node has a weight, and the cost of a set of nodes $S \subseteq V$ is the sum of the weights of all nodes in the smallest subtree (including the root) that spans $S$. This class of functions too admits to solve the proximal problem in $O(n\log n)$ time \cite{iwata04,hochbaum95}. Related tree functions have been considered in \cite{jenatton11}, where the elements $v$ of the ground set are arranged in a tree of height $d$ and each have a weight $w(v)$. Let $\mathrm{desc}(v)$ be the set of descendants of $v$ in the tree. Then $F(S) = \sum_{v \in V}w(v) 1[\mathrm{dec}(v) \inter S \neq \emptyset]$. \citet{jenatton11} show how to solve the proximal problem for such a function in time $O(nd)$.
\\[4pt]
\textbf{Small support.} Any general, potentially slower algorithm such as the minimum-norm-point algorithm can be applied if the support of each $F_j$ is only a small subset of the ground set.

\vspace*{-4pt}
\subsection{Dual decomposition of the nonsmooth problem}
\label{sec:ns.dualdecomp}
We first review existing dual decomposition techniques for the nonsmooth problem \eqref{eq:sfm}. We always assume that $F = \sum_{j=1}^rF_j$, and define $\Hc^r := \prod_{j=1}^r \reals^n \simeq \reals^{n\times r}$. We follow~\cite{komodakis2011mrf} to derive a dual formulation (see Appendix~\ref{sec:duals}):
\begin{lemma}\label{lem.dualnonsmooth}
  The dual of Problem~\eqref{eq:sfm} may be written in terms of variables $\lambda_1,\ldots,\lambda_r \in \reals^n$ as
  \begin{equation}\label{eq:nsdual}
    \max \nlsum_{j=1}^r g_j(\lambda_j) \qquad\text{s.t. }\;\; \lambda \in \bigl\{(\lambda_1,\dots,\lambda_r) \in \Hc^r \mid \nlsum_{j=1}^r \lambda_j = 0\bigr\}
  \end{equation}
  where $g_j(\lambda_j) = \min_{S \subset V} F_j(S) - \lambda_j(S)$ is  a  nonsmooth concave function.
\end{lemma}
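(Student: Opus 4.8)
The plan is to derive \eqref{eq:nsdual} by Lagrangian (dual) decomposition applied to the convex reformulation $\min_{x\in[0,1]^n}\nlsum_{j=1}^r f_j(x)$ of \eqref{eq:sfm}, which has the same optimal value by property (b) recalled in \mysec{review}. First I would introduce one copy $x_j \in \reals^n$ of the primal variable per block $F_j$ and rewrite the primal as
$$
\min \; \nlsum_{j=1}^r f_j(x_j) \quad \text{s.t. } \; x_j \in [0,1]^n \ \forall j, \ \ x_1 = \cdots = x_r ,
$$
i.e.\ as a minimization over the consensus (diagonal) subspace $D = \{(x,\dots,x) : x \in \reals^n\} \subseteq \Hc^r$ intersected with the box $([0,1]^n)^r$. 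Writing $\xi = (x_1,\dots,x_r)$, the only coupling between blocks is now the agreement constraint $\xi \in D$.

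Next I would dualize that constraint. Since $D$ is a subspace, its indicator satisfies $\iota_D(\xi) = \sup_{\lambda \in D^\perp}\langle \lambda, \xi\rangle$, where $D^\perp = \{\lambda \in \Hc^r \mid \nlsum_{j=1}^r \lambda_j = 0\}$ is exactly the feasible set of \eqref{eq:nsdual}. Substituting, the primal becomes $\min_{\xi \in ([0,1]^n)^r}\,\sup_{\lambda \in D^\perp}\,[\nlsum_j f_j(x_j) + \langle \lambda, \xi\rangle]$. The key move is to exchange the min and the sup; because both the objective (the $f_j$ are support functions of the polytopes $B(F_j)$, hence polyhedral convex) and the constraints are polyhedral, strong duality holds with no gap, yielding $\sup_{\lambda \in D^\perp}\,\nlsum_{j=1}^r \min_{x_j\in[0,1]^n}[f_j(x_j) + \lambda_j^\top x_j]$, which decouples across $j$.

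Then I would identify each inner problem with $g_j$. Since $\lambda_j$ is modular, the set function $S \mapsto F_j(S) + \lambda_j(S)$ is still submodular and its \lova extension is exactly $f_j + \lambda_j^\top(\cdot)$; hence by property (b) the box-constrained minimum $\min_{x_j \in [0,1]^n}[f_j(x_j) + \lambda_j^\top x_j]$ equals the discrete minimum $\min_{S \subseteq V}[F_j(S) + \lambda_j(S)]$. Replacing $\lambda_j$ by $-\lambda_j$ (legitimate because $D^\perp$ is symmetric and $\nlsum_j \lambda_j = 0 \iff \nlsum_j (-\lambda_j) = 0$) turns this into $g_j(\lambda_j) = \min_{S\subseteq V} F_j(S) - \lambda_j(S)$, producing \eqref{eq:nsdual}. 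Concavity of $g_j$ is then immediate: for each fixed $S$ the map $\lambda_j \mapsto F_j(S) - \lambda_j(S)$ is affine, so $g_j$ is a pointwise minimum of finitely many affine functions, hence concave and piecewise linear (nonsmooth).

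The step I expect to be the crux is justifying the min--sup exchange (strong duality with zero gap); I would lean on the polyhedrality of the $f_j$ and of the box so that the saddle value is attained and no constraint qualification beyond nonemptiness of $[0,1]^n$ is required. A minor point to handle carefully is the sign relabelling together with the fact that adding the modular term $\lambda_j$ preserves submodularity, so that property (b) applies verbatim to $F_j + \lambda_j$.
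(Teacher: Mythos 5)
Your proof is correct and takes essentially the same route as the paper's (Appendix~\ref{sec:duals}): introduce per-block copies with a consensus constraint, dualize that constraint (your $D^\perp$ is exactly the paper's multiplier set $\{\lambda \mid \sum_{j}\lambda_j = 0\}$, which the paper obtains by minimizing over an auxiliary free variable $x$ in the Lagrangian), exchange min and max, and reduce each decoupled inner problem to a discrete submodular minimization. The only minor difference is the last step: the paper identifies $\min_{x_j\in[0,1]^n} f_j(x_j)-\lambda_j^\top x_j$ with $g_j(\lambda_j)$ through the dual identity~\eqref{eq:dual} applied to the shifted base polytope $B(F_j)-\lambda_j$, whereas you observe that $f_j + \lambda_j^\top(\cdot)$ is the Lov\'asz extension of the submodular function $F_j+\lambda_j$ and invoke the discrete--continuous equivalence directly; these are equivalent uses of the facts recalled in Section~\ref{sec:review}.
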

The dual is the maximization of a nonsmooth concave  function over a convex set, 
onto which it is easy to project: the projection of a vector $y$ has $j$-th block equal to $y_j - \frac{1}{r} \sum_{k=1}^r y_k$. Moreover, in our setup, functions $g_j$ and their subgradients may be computed efficiently through SFM.

We consider several existing alternatives for the minimization of $f(x)$ on $x \in [0,1]^n$, most of which use Lemma~\ref{lem.dualnonsmooth}. Computing subgradients for any $f_j$ means calling the greedy algorithm, which runs in time $O(n\log n)$. All of the following algorithms require the tuning of an appropriate step size.

\textbf{Primal subgradient descent (primal-sgd)}: Agnostic to any decomposition properties, we may apply a standard simple subgradient method to $f$. A subgradient of $f$ may be obtained from the subgradients of the components $f_j$. This algorithm 
converges at rate $O(1/\sqrt{t})$.\\[4pt]
\textbf{Dual subgradient descent (dual-sgd)} \cite{komodakis2011mrf}: Applying a subgradient method to the nonsmooth dual in Lemma~\ref{lem.dualnonsmooth} leads to a convergence rate of $O(1/\sqrt{t})$. Computing a subgradient requires minimizing the submodular functions $F_j$ individually. In simulations, following \cite{komodakis2011mrf}, we consider a step-size rule similar to Polyak's rule (dual-sgd-P)~\cite{bertsekas1999nonlinear}, as well as a decaying step-size (dual-sgd-F), and use discrete optimization for all $F_j$.\\[4pt]
\textbf{Primal smoothing (primal-smooth)} \cite{stobbe11}: The nonsmooth primal may be smoothed in several ways by smoothing the $f_j$ individually; one example is
$\tilde{f}^\varepsilon_j(x_j) = \max_{y_j \in B(F_j)} y_j^\top x_j - \frac{\varepsilon}{2} \| y_j\|^2$. This leads to a function that is $(1/\varepsilon)$-smooth. Computing $\tilde{f}^\varepsilon_j$ means solving the proximal problem for~$F_j$. 
The convergence rate is $O(1/t)$, but, apart from the step size which may be set relatively easily, the smoothing constant $\varepsilon$ needs to be defined.\\[4pt]
\textbf{Dual smoothing (dual-smooth)}: Instead of the primal, the dual \eqref{eq:nsdual} may be smoothed, e.g., by entropy 
\cite{savchynskyy11,savchynskyy12} applied to each $g_j$ as
$\tilde{g}_j^\varepsilon(\lambda_j) =  \min_{x \in [0,1]^n} f_j(x) + \varepsilon h(x) $ where $h(x)$ is a negative entropy. 
Again, the convergence rate is $O(1/t)$ but there are two free parameters (in particular the smoothing constant $\varepsilon$ which is hard to tune). This method too requires solving proximal problems for all $F_j$ in each iteration.

Dual smoothing with entropy also admits coordinate descent methods \cite{meshi12} that exploit the decomposition, but we do not compare to those here.

\vspace*{-4pt}
\subsection{Dual decomposition methods for proximal problems}
\label{sec.dual.decomp}
\vspace*{-4pt}
We  may also consider \eq{smoothmin} and first derive a dual problem using the same technique as in \mysec{ns.dualdecomp}. Lemma~\ref{lem.dual} (proved in Appendix~\ref{sec:duals}) formally presents our dual formulation as a best approximation problem. The primal variable can be recovered as $x = -\sum_jy_j$.
\begin{lemma}
  \label{lem.dual}
  The dual of~\eq{smoothmin} may be written as the \emph{best approximation problem}
  \begin{align}
    \label{eq.1}
    \min_{\lambda,y}\quad\enorm{y-\lambda}^2\qquad\text{s.t. }\,& \lambda \in \bigl\{(\lambda_1,\dots,\lambda_r) \in \Hc^r \mid \nlsum_{j=1}^r \lambda_j = 0\bigr\},
     \qquad y \in \prod\nolimits_{j=1}^r B(F_j).\vspace*{-4pt}
  \end{align}
\end{lemma}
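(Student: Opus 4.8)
The plan is to dualize the proximal objective $\nlsum_{j=1}^r f_j(x) + \half\enorm{x}^2$ by exploiting that each \lova extension $f_j$ is the support function of its base polytope, i.e. $f_j(x) = \max_{y_j \in B(F_j)} \ip{y_j}{x}$ (the Edmonds/greedy characterization recalled in \mysec{review}), exactly mirroring the derivation of Lemma~\ref{lem.dualnonsmooth} but now carrying the extra strongly convex term. Substituting this representation rewrites the primal as
\[
\min_{x \in \reals^n}\ \max_{y \in \prod_{j} B(F_j)}\ \ip{\nlsum_j y_j}{x} + \half \enorm{x}^2 .
\]

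First I would swap the order of the min and the max. This is legitimate because the objective is (strongly) convex in $x$ and concave — in fact linear — in $y$, while the feasible set $\prod_{j=1}^r B(F_j)$ is compact and convex (each $B(F_j)$ is a polytope), so Sion's minimax theorem applies. After the swap, the inner problem $\min_{x}\ \ip{\sum_j y_j}{x} + \half\enorm{x}^2$ is an unconstrained strongly convex quadratic, solved in closed form by $x = -\sum_j y_j$ with optimal value $-\half\enorm{\sum_j y_j}^2$. This simultaneously establishes the stated primal--dual link $x = -\sum_j y_j$ and reduces the dual to
\[
\max_{y \in \prod_j B(F_j)}\ -\half \enorm{\nlsum_j y_j}^2 \;\equiv\; \min_{y \in \prod_j B(F_j)}\ \enorm{\nlsum_j y_j}^2 .
\]

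Next I would recast this as a best-approximation problem by a purely geometric identity in $\Hc^r$. Let $C = \{\lambda \in \Hc^r \mid \nlsum_j \lambda_j = 0\}$. For fixed $y$, minimizing $\enorm{y-\lambda}^2 = \sum_j \enorm{y_j - \lambda_j}^2$ over $\lambda \in C$ is the orthogonal projection of $y$ onto the subspace $C$; the projection is $\lambda_j^\ast = y_j - \tfrac1r \sum_k y_k$ (precisely the block formula noted right after Lemma~\ref{lem.dualnonsmooth}), whence $y_j - \lambda_j^\ast = \tfrac1r\sum_k y_k$ and $\min_{\lambda \in C}\enorm{y-\lambda}^2 = \tfrac1r \enorm{\sum_k y_k}^2$. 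Therefore $\min_{y \in \prod_j B(F_j),\, \lambda \in C} \enorm{y-\lambda}^2 = \tfrac1r \min_{y \in \prod_j B(F_j)} \enorm{\sum_k y_k}^2$, which is a fixed positive multiple of the dual objective; the two problems share the same minimizing $y^\ast$, and the recovery formula $x^\ast = -\sum_j y_j^\ast$ is unchanged. This is exactly the claimed formulation~\eqref{eq.1}, read as finding the closest pair between the product of base polytopes and the subspace $C$.

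The main obstacle I anticipate is the rigorous justification of the min--max interchange: I must verify the hypotheses of Sion's theorem (one compact convex factor, here the $y$-domain; the other factor $\reals^n$ only needs to be convex) so that no duality gap arises and the maximum is attained. Alongside this, the only bookkeeping care is to track the harmless scalar factor ($\tfrac1r$ versus $\tfrac12$) and additive constants, so that the asserted equivalence is stated at the level of \emph{optimal solutions} rather than optimal values. A minor preliminary check is that each $B(F_j)$ is nonempty and bounded (guaranteed by submodularity together with $F_j(\emptyset)=0$), which ensures the suprema defining $f_j$ are attained and the interchange is valid.
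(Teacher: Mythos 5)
Your proof is correct, but it reaches \eqref{eq.1} along a genuinely different path than the paper. The paper's proof (Appendix A.2) uses consensus variable splitting: it duplicates $x$ into $x_1,\dots,x_r$ with constraints $x_1=\dots=x_r$, distributes the quadratic as $\tfrac{1}{2r}\enorm{x_j}^2$ over the blocks, and applies Lagrangian duality, so that the $\lambda_j$ arise as the multipliers of the coupling constraints; the inner minimization over each $x_j$ then yields $-\tfrac{r}{2}\enorm{y_j-\lambda_j}^2$ and the best approximation form appears directly, mirroring the derivation of Lemma~\ref{lem.dualnonsmooth}. You instead dualize without any splitting — support function plus a Sion minimax swap over the compact set $\prod_j B(F_j)$ — which lands you first on the simpler dual \eqref{eq:dualsmooth}, and you then \emph{reintroduce} the $\lambda$ variables a posteriori via the orthogonal projection onto the zero-sum subspace, using $\min_{\lambda \in C}\enorm{y-\lambda}^2 = \tfrac{1}{r}\bigenorm{\nlsum_k y_k}^2$. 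In effect you run the paper's presentation in reverse: the paper proves Lemma~\ref{lem.dual} and then merely asserts that ``we can eliminate the $\lambda_j$'' to get \eqref{eq:dualsmooth}, whereas you prove that elimination step rigorously and obtain the lemma as a corollary. What the paper's route buys is a uniform treatment with Lemma~\ref{lem.dualnonsmooth} and an interpretation of $\lambda_j$ as consensus multipliers (with the block-wise primal recovery $x_j = -r(y_j-\lambda_j)$); what your route buys is an easier duality justification (minimax over a compact polytope rather than Lagrangian strong duality), an explicit proof of the equivalence between the two dual forms \eqref{eq.1} and \eqref{eq:dualsmooth}, and a transparent geometric reading of \eqref{eq.1}. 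Your one necessary caveat — that the objectives of \eqref{eq.1} and \eqref{eq:dualsmooth} differ by the harmless factor $\tfrac{1}{r}$ (and a sign/scale), so the equivalence is at the level of minimizers rather than optimal values — is correctly flagged; note the paper's own proof also discards such constants when passing from \eqref{eq:dualsmooth2} to \eqref{eq.1}, so this is not a defect relative to the paper.
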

We can actually eliminate the $\lambda_j$ and obtain the simpler looking dual problem
\begin{equation}
  \label{eq:dualsmooth}
  \max_{y} - \frac{1}{2} \Bigl \| \nlsum_{j=1}^r y_j  \Bigr\|_2^2 \qquad \text{s.t. }\;\; y_j \in B(F_j), \ j \in \{1,\dots,r\}
\end{equation}
Such a dual was also used in \cite{stobbe13}.
In \mysec{experiments}, we will see the effect of solving one of these duals or the other.
For the simpler dual~\eqref{eq:dualsmooth} the case $r=2$ is of special interest; it reads
\begin{equation}
  \label{eq:dualsmoothr2}
  \max_{y_1 \in B(F_1), \ y_2 \in B(F_2)} - \frac{1}{2} \| y_1  + y_2 \|_2^2\quad\Longleftrightarrow \min_{y_1 \in B(F_1), -y_2 \in -B(F_2)}\enorm{y_1 - (-y_2)}.\vspace*{1pt}
\end{equation}
We write Problem~\eqref{eq:dualsmoothr2} in this suggestive form to highlight its key geometric structure: it is, like \eqref{eq.1}, a \emph{best approximation problem}: i.e., the problem of finding the closest point between the polytopes $B(F_1)$ and $-B(F_2)$. Notice, however, that \eqref{eq.1} is very different from~\eqref{eq:dualsmoothr2}---the former operates in a product space while the latter does not, a difference that can have impact in practice (see \mysec{experiments}).
We are now ready to present algorithms that exploit our dual formulations.

\vspace*{-6pt}
\section{Algorithms}
\label{sec.algo}
\vspace*{-4pt}
We describe a few competing methods for solving our smooth dual formulations. We describe the details for the special 2-block case~\eqref{eq:dualsmoothr2}; the same arguments apply to the block dual from Lemma~\ref{lem.dual}.

\subsection{Block coordinate descent or proximal-Dykstra}
\vspace*{-4pt}
Perhaps the simplest approach to solving~\eqref{eq:dualsmoothr2} (viewed as a minimization problem) is to use a block coordinate descent (BCD) procedure, which in this case performs the alternating projections:
\begin{equation}
  \label{eq.4}
    y_1^{k+1} \gets \argmin\nolimits_{y_1 \in B(F_1)} \enorm{y_1 - (-y_2^k)}^2;\quad\quad
    y_2^{k+1} \gets \argmin\nolimits_{y_2 \in B(F_2)} \enorm{y_2 -(-y_1^{k+1})}.
\end{equation}
The iterations for solving \eqref{eq:dualsmooth} are analogous.
This BCD method (applied to \eqref{eq:dualsmoothr2}) is equivalent to applying the so-called proximal-Dykstra method~\cite{comPes11} to the primal problem. This may be seen by comparing the iterates. 
Notice that the BCD iteration~\eqref{eq.4} is nothing but alternating projections onto the convex polyhedra $B(F_1)$ and $B(F_2)$. There exists a large body of literature studying method of alternating projections---we refer the interested reader to the monograph~\cite{deutsch} for further details. 

However, despite its attractive simplicity, it is known that BCD (in its alternating projections form), can converge arbitrarily slowly~\cite{bauschke2004finding} depending on the relative orientation of the convex sets onto which one projects. Thus, we turn to a potentially more effective method.

\subsection{Douglas-Rachford splitting}
\label{sec:DR}
The Douglas-Rachford (DR) splitting method~\cite{DR} includes algorithms like ADMM as a special case~\cite{comPes11}. It avoids the slowdowns alluded to above by replacing alternating projections with alternating ``reflections''. Formally, DR applies to convex problems of the form~\cite{comPes11,bauComb}
\begin{equation}
  \label{eq.6}
  \min\nolimits_x\quad \phi_1(x)+\phi_2(x),
\end{equation}
subject to the qualification $\relint(\dom \phi_1) \cap \relint(\dom \phi_2) \neq \varnothing$. To solve~\eqref{eq.6}, DR starts with some $z_0$, and performs the three-step iteration (for $k\ge 0$):
\begin{equation}
\label{eq.5}
  \begin{split}
    1.\ \ x_k = \prox_{\phi_2}(z_k);\qquad2.\ \ v_k = \prox_{\phi_1}(2x_k-z_k);\qquad3.\ \ z_{k+1} = z_k + \gamma_k(v_k-z_k),
  \end{split}
\end{equation}
where $\gamma_k \in [0,2]$ is a sequence of scalars that satisfy $\nlsum_k \gamma_k(2-\gamma_k) = \infty$. The sequence $\set{x_k}$ produced by iteration~\eqref{eq.5} can be shown to converge to a solution of~\eqref{eq.6} \cite[Thm.~25.6]{bauComb}. 

Introducing the \emph{reflection operator} $$R_{\phi} := 2\prox_\phi-\id,$$ and setting $\gamma_k=1$, the DR iteration~\eqref{eq.5} may be written in a more symmetric form as
\begin{equation}
  \label{eq.7}
  x_k = \prox_{\phi_2}(z_k),\qquad z_{k+1} = \half[R_{\phi_1}R_{\phi_2}+\id]z_k,\quad k \ge 0.
\end{equation}

Applying DR to the duals~\eqref{eq.1} or \eqref{eq:dualsmoothr2}, requires first putting them in the form~\eqref{eq.6}, either by introducing extra variables or by going back to the primal, which is unnecessary. This is where the special structure of our dual problem proves crucial, a recognition that is subtle yet remarkably important. 

Instead of applying DR to~\eqref{eq:dualsmoothr2}, consider the closely related problem
\begin{equation}
  \label{eq.3}
  \min\nolimits_{y}\quad \delta_1(y) + \delta_2^{-}(y),
\end{equation}
where $\delta_1$, $\delta_2^{-}$ are indicator functions for $B(F_1)$ and $-B(F_2)$, respectively. Applying DR directly to~\eqref{eq.3} does not work because usually
  $\relint(\dom \delta_1) \cap \relint(\dom \delta_2) = \varnothing$.
Indeed, applying DR to~\eqref{eq.3} generates iterates that diverge to infinity~\cite[Thm.~3.13(ii)]{bauschke2004finding}. Fortunately, even though the DR iterates for~\eqref{eq.3} may diverge, \citet{bauschke2004finding} show how to extract convergent sequences from these iterates, which actually solve the corresponding best approximation problem; for us this is nothing but the dual~\eqref{eq:dualsmoothr2} that we wanted to solve in the first place. Theorem~\ref{thm.aar}, which is a simplified version of~\cite[Thm.~3.13]{bauschke2004finding}, formalizes the above discussion.

\begin{theorem}{\protect{\cite{bauschke2004finding}}}
  \label{thm.aar}
  Let $\Ac$ and $\Bc$ be nonempty polyhedral convex sets. Let $\Pi_{\Ac}$ ($\Pi_{\Bc}$)  denote orthogonal projection onto $\Ac$ ($\Bc$), and let $R_{\Ac} := 2\Pi_{\Ac}-\id$ (similarly $R_{\Bc}$) be the corresponding reflection operator.   Let $\set{z_k}$ be the sequence generated by the DR method~\eqref{eq.7} applied to~\eqref{eq.3}. If $\Ac \cap \Bc \neq\varnothing$, then $\set{z_k}_{k \ge 0}$ converges weakly to a fixed-point of the operator $T := \half [R_{\Ac}R_{\Bc}+\id]$; otherwise $\enorm{z_k} \to \infty$. The sequences $\set{x_k}$ and $\set{\Pi_{\Ac}\Pi_{\Bc}z_k}$ are bounded; the weak cluster points of either of the two sequences
 \begin{equation}
   \label{eq.2}
   \set{(\Pi_{\Ac}R_{\Bc}z_k, x_k)}_{k\ge 0}\quad \set{(\Pi_{\Ac}x_k,x_k)}_{k \ge 0},
 \end{equation}
 are solutions best approximation problem $\min_{a,b}\|a-b\|$ such that $a\in \Ac$ and $b \in \Bc$.
\end{theorem}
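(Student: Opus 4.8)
The plan is to reduce the statement to the abstract theory of firmly nonexpansive (averaged) operators and then to separate the two regimes through the \emph{minimal displacement vector} of $T$. First I would record the structural facts. Since $\Ac$ and $\Bc$ are closed convex, the projections $\Pi_{\Ac},\Pi_{\Bc}$ are firmly nonexpansive, hence the reflections $R_{\Ac}=2\Pi_{\Ac}-\id$ and $R_{\Bc}=2\Pi_{\Bc}-\id$ are nonexpansive; their composition $R_{\Ac}R_{\Bc}$ is nonexpansive, and therefore $T=\half[R_{\Ac}R_{\Bc}+\id]$ is firmly nonexpansive (equivalently $\tfrac12$-averaged). This is essentially the only place convexity enters directly, and it makes the full machinery of averaged-operator iterations available for $\set{z_k}$.

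Next I would introduce the gap vector $g$, the minimal-norm element of $\overline{\Bc-\Ac}$; the best-approximation pairs $(a,b)\in\Ac\times\Bc$ are exactly those with $b-a=g$. The crux is to connect $g$ to the asymptotics of $T$ via its minimal displacement vector $v$, the minimal-norm element of $\overline{\operatorname{ran}(\id-T)}$. Using $\id-T=\half(\id-R_{\Ac}R_{\Bc})$ I would verify $v=-g$ (up to the sign convention), together with the equivalence $\operatorname{Fix}(T)\neq\varnothing \Leftrightarrow v=0 \Leftrightarrow 0\in\overline{\Bc-\Ac}$. Here polyhedrality enters decisively: $\Bc-\Ac$ is then polyhedral, hence closed, so $0\in\overline{\Bc-\Ac}$ is equivalent to $\Ac\cap\Bc\neq\varnothing$ and the gap is always attained. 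With this dictionary the two cases follow from standard orbit results for averaged maps: $z_k-z_{k+1}\to v$ always holds, so when $\Ac\cap\Bc\neq\varnothing$ we have $v=0$, asymptotic regularity $z_k-Tz_k\to0$, and---since $\operatorname{Fix}(T)\neq\varnothing$---weak convergence of $\set{z_k}$ to a fixed point by the Krasnosel'ski\u{\i}--Mann theorem (Opial plus demiclosedness of $\id-T$); when $\Ac\cap\Bc=\varnothing$ we have $v\neq0$, and from $z_k=z_0-\sum_{j<k}(z_j-z_{j+1})$ together with $z_k/k\to-v$ we conclude $\enorm{z_k}\to\infty$.

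The remaining task is to show the shadow sequences solve the best-approximation problem. Even when $z_k$ escapes to infinity it does so along the single direction $v$, so I would argue that $x_k=\Pi_{\Bc}z_k$ and $\Pi_{\Ac}R_{\Bc}z_k$ (resp.\ $\Pi_{\Ac}x_k=\Pi_{\Ac}\Pi_{\Bc}z_k$) stay bounded, using that consecutive differences converge to $v$ and that projections are nonexpansive. For a weakly convergent subsequence $x_k\rightharpoonup b^\ast$ one gets $b^\ast\in\Bc$, the companion sequence converges weakly to some $a^\ast\in\Ac$, and the displacement identities force $b^\ast-a^\ast=g$; hence $\enorm{a^\ast-b^\ast}=\enorm{g}=\min_{a\in\Ac,b\in\Bc}\enorm{a-b}$, i.e.\ $(a^\ast,b^\ast)$ is a best-approximation pair, which is exactly what the two cluster-point sequences in \eqref{eq.2} encode.

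The main obstacle is the case $\Ac\cap\Bc=\varnothing$: one must simultaneously show that $\set{z_k}$ diverges while its shadows remain bounded and converge weakly to a genuine minimizer rather than merely a feasible pair. This hinges on pinning down $v=\pm g$ exactly and on the polyhedral hypothesis, which guarantees the gap is attained and the range of $\id-T$ is closed; the weak-topology bookkeeping (boundedness of the shadows and demiclosedness in the identification of cluster points) is the delicate technical core, and it is precisely here that we appeal to the analysis of \citet{bauschke2004finding}.
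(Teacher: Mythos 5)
The paper offers no proof of this theorem: it is imported, in simplified polyhedral form, directly from \cite[Thm.~3.13]{bauschke2004finding}, and your sketch follows precisely that source's argument---firm nonexpansiveness of $T$, the minimal displacement vector and its identification (up to sign) with the gap vector of $\Bc-\Ac$, polyhedrality to guarantee closedness of $\Bc-\Ac$ and attainment of the gap, Krasnoselskii--Mann convergence in the consistent case, $z_k/k \to -v$ forcing $\enorm{z_k}\to\infty$ in the inconsistent case---while deferring the same technical core (boundedness of the shadow sequences and identification of their weak cluster points as best-approximation pairs) to the same reference. Your proposal is therefore correct and takes essentially the same route as the paper, namely outright reliance on the analysis of \citet{bauschke2004finding}.
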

 
The key consequence of Theorem~\ref{thm.aar} is that we can apply DR with impunity to~\eqref{eq.3}, and extract from its iterates the optimal solution to  problem~\eqref{eq:dualsmoothr2} (from which recovering the primal is trivial). The most important feature of solving the dual~\eqref{eq:dualsmoothr2} in this way is that absolutely no stepsize tuning is required,
making the method very practical and user friendly (see also Appendix~\ref{sec:recipe}).

\vspace*{-.3cm}
\section{Experiments}
\label{sec:results}
\label{sec:simulations}
\label{sec:experiments}
\vspace*{-.35cm}

We empirically compare the proposed projection methods\footnote{Code and data corresponding to this paper are available at \textit{https://sites.google.com/site/mloptstat/drsubmod}} to the (smoothed) subgradient methods discussed in Section~\ref{sec:ns.dualdecomp}. For solving the proximal problem, we apply block coordinate descent (BCD) and Douglas-Rachford (DR) to Problem \eqref{eq:dualsmooth} if applicable, and also to \eqref{eq.1} (BCD-para, DR-para). In addition, we use acceleration to solve~\eqref{eq:dualsmooth} or~\eqref{eq:dualsmoothr2}~\cite{fista}.
The main iteration cost of all methods except for the primal subgradient method is the orthogonal projection onto polytopes $B(F_j)$, and therefore the number of iterations is a suitable criterion for comparisons.
The primal subgradient method uses the greedy algorithm in each iteration, which runs in $O(n\log n)$. However, as we will see, its convergence is so slow to counteract any benefit that may arise from not using projections.
We do not include Frank-Wolfe methods here, since FW is equivalent to a subgradient descent on the primal and converges correspondingly slowly. 

As benchmark problems, we use (i) graph cut problems for segmentation, or MAP inference in a 4-neighborhood grid-structured MRF, and (ii) concave functions similar to those used in \cite{stobbe11}, but together with graph cut functions. The segmentation problems (i) are set up in a fairly standard way on a 4-neighbor grid graph, with unary potentials derived from Gaussian Mixture Models of color features. The weight of graph edge $(i,j)$ is a function of $\exp(-\|y_i - y_j\|^2)$, where $y_i$ is the RGB color vector of pixel $i$.
The functions in (i) decompose as sums over vertical and horizontal paths. All horizontal paths are independent and can be solved together in parallel, and similarly all vertical paths. The functions in (ii) are constructed by extracting regions $R_j$ via superpixels \cite{levinshtein09} and, for each $R_j$, defining the function $F_j(S) = |S||R_j\setminus S|$. We use 200 and 500 regions. The problems have size $640\times 427$. Hence, for (i) we have $r=640+427$ (but solve it as $r=2$) and for (ii) $r=640+427+500$ (solved as $r=3$).

For algorithms working with formulation \eqref{eq.1}, we compute an improved smooth duality gap of a current primary solution $x = -\sum_j y_j$ as follows: find $y' \in \argmax_{y \in B(F)} x^\top y$ (then $f(x) = x^\top y'$) and find an improved $x'$ by minimizing $\min_z z^\top y' + \half\|z\|^2$ subject to the constraint that $z$ has the same ordering as $x$ \cite{bach2011learning}. The constraint ensures that $(x')^\top y' = f(x')$. This is an isotonic regression problem and can be solved in time $O(n)$ using the ``pool adjacent violators'' algorithm \cite{bach2011learning}. The gap is then 
$f(x') + \half\|x'\|^2 - (-\half\|y'\|^2)$.

For computing the discrete gap, we find the best level set $S_i$ of $x$ and, using $y' = -x$, compute $\min_i F(S_i) - y'_{-}(V)$.

 \begin{figure}
   \centering
   {\footnotesize
   \begin{tabular}{c@{\hspace{0pt}}c@{\hspace{0pt}}cc@{\hspace{0pt}}c}
     & pBCD, iter 1 & pBCD, iter 7 & DR, iter 1 & DR, iter 4\\     
     \includegraphics[width=0.2\textwidth]{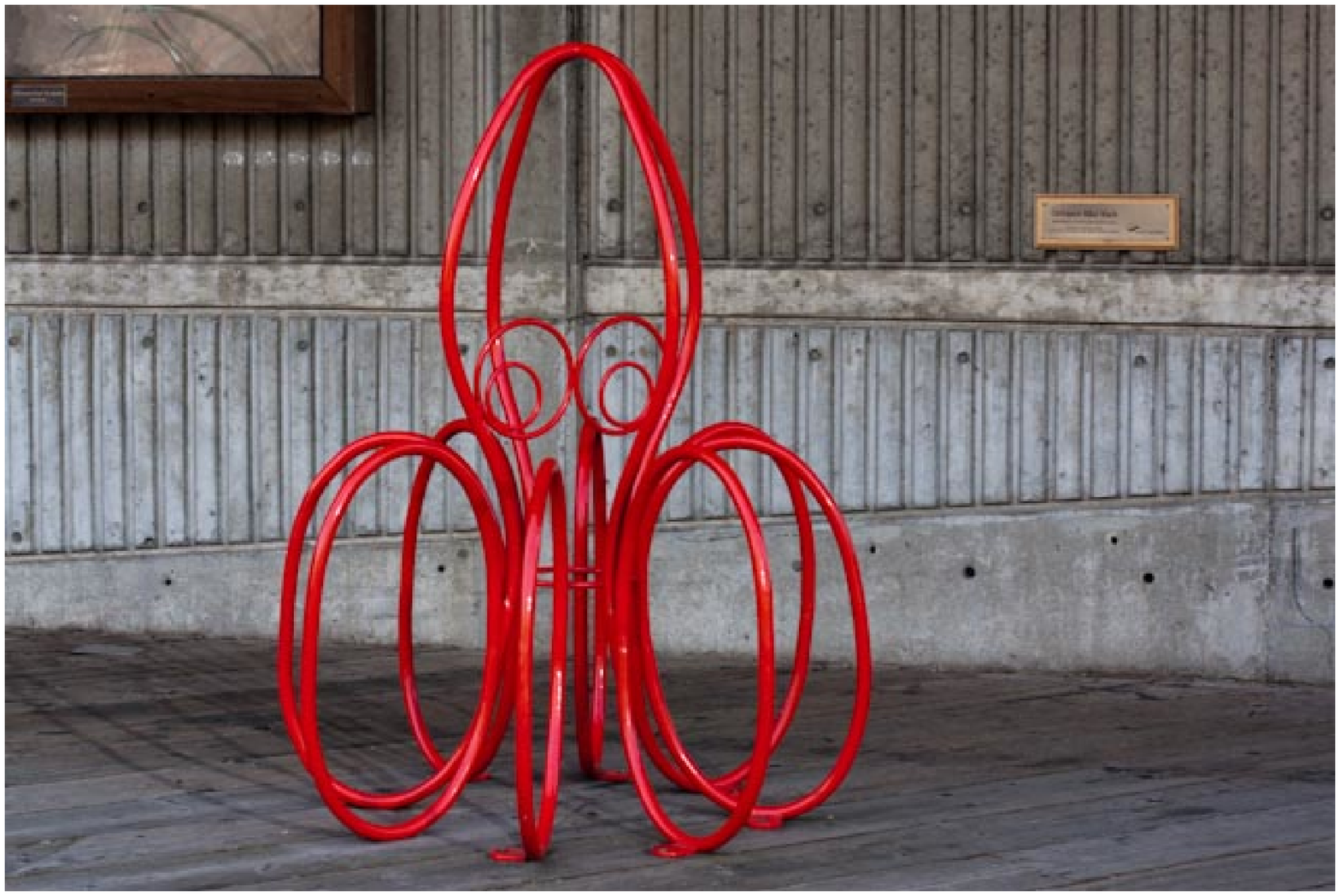} &
     \includegraphics[width=0.2\textwidth]{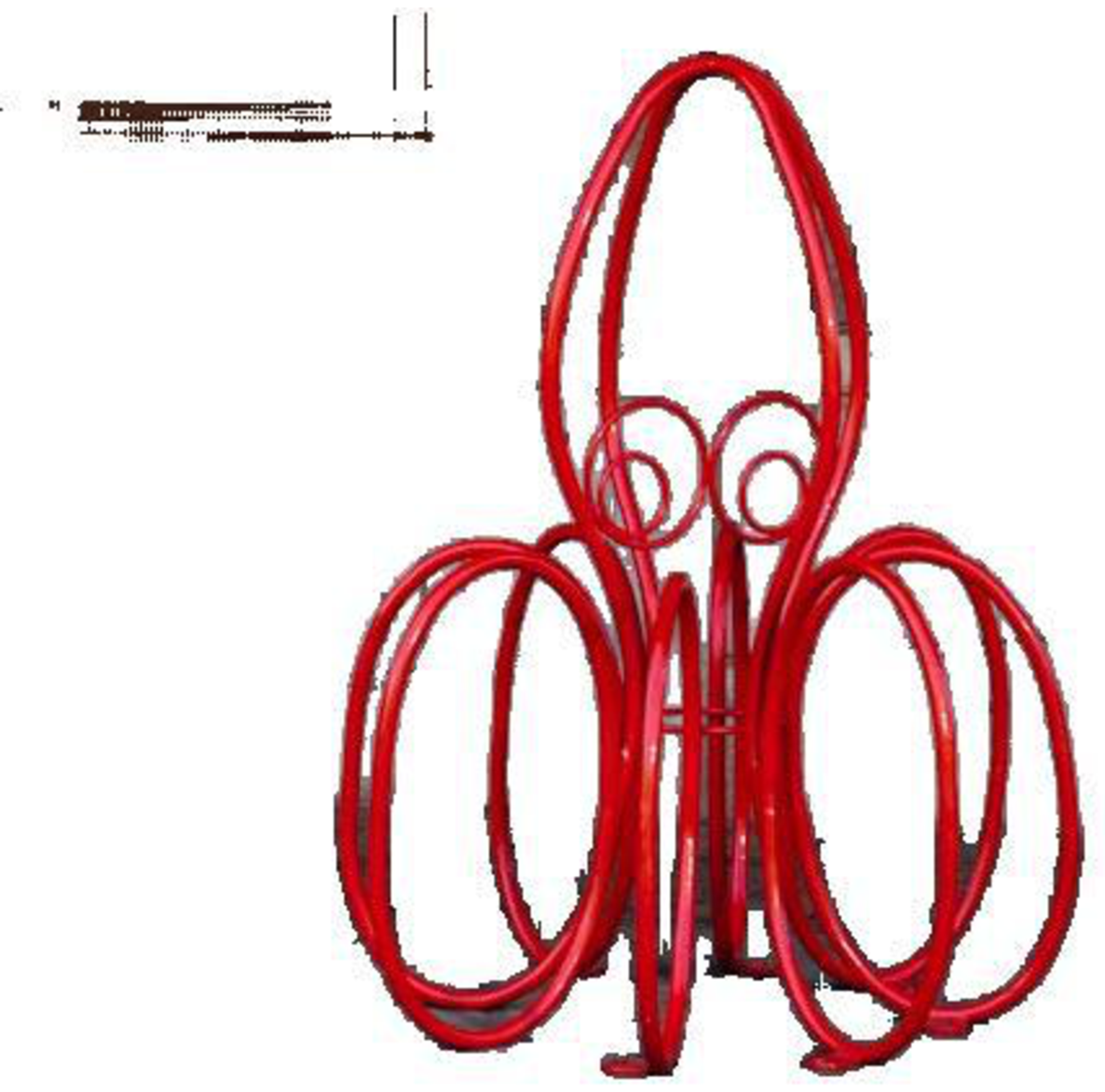} &
     \includegraphics[width=0.2\textwidth]{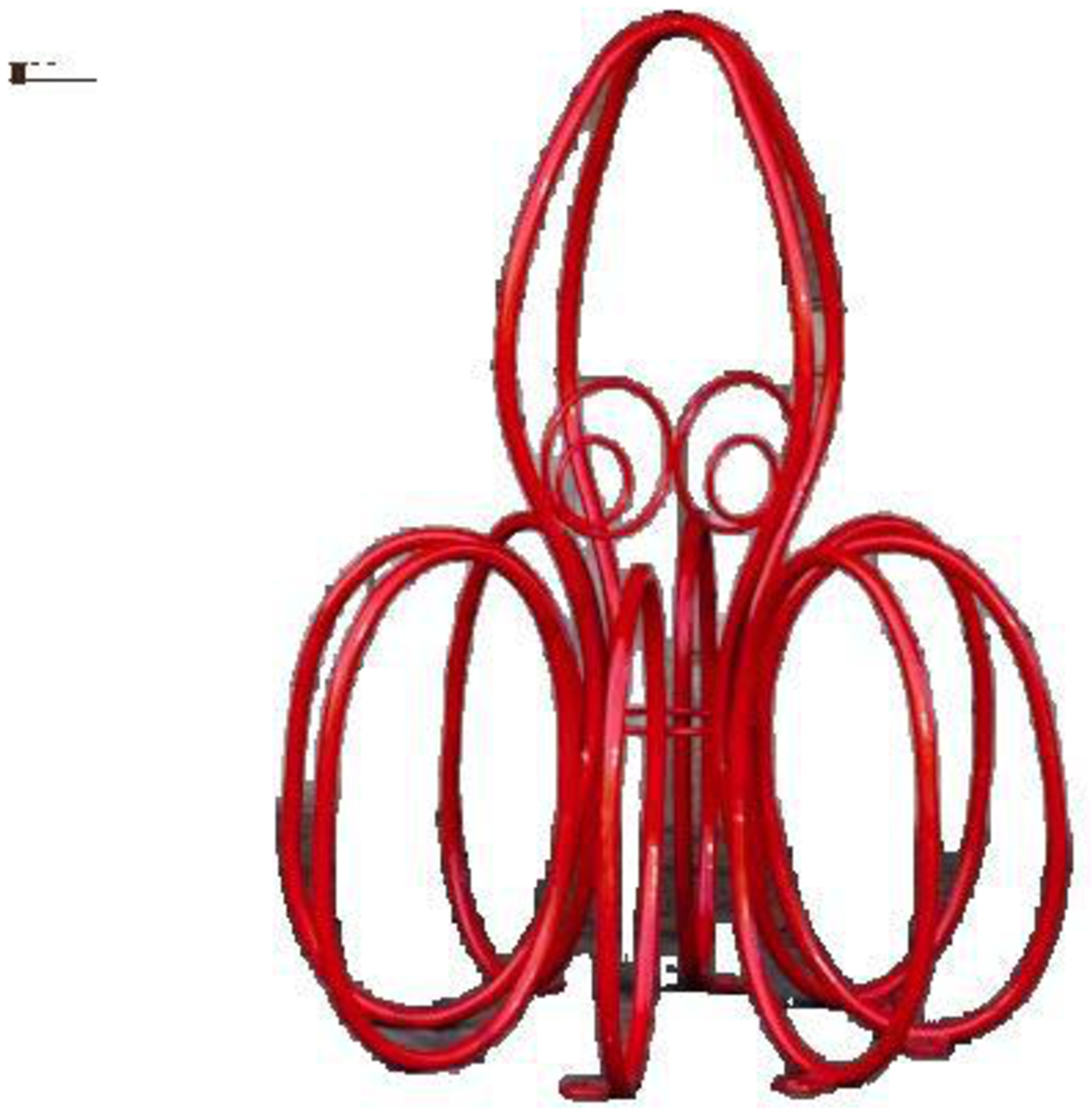}&
     \includegraphics[width=0.2\textwidth]{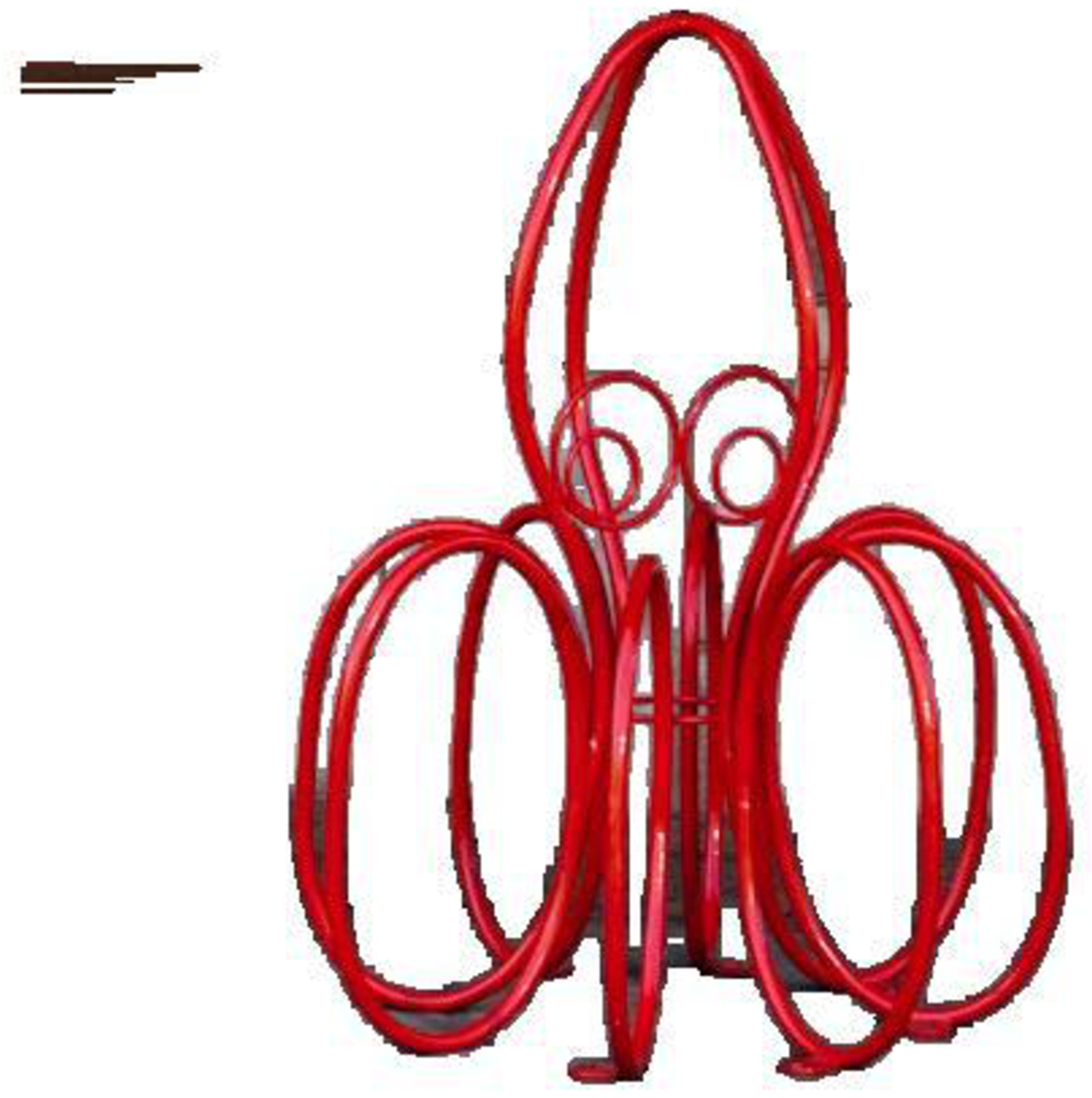}&
     \includegraphics[width=0.2\textwidth]{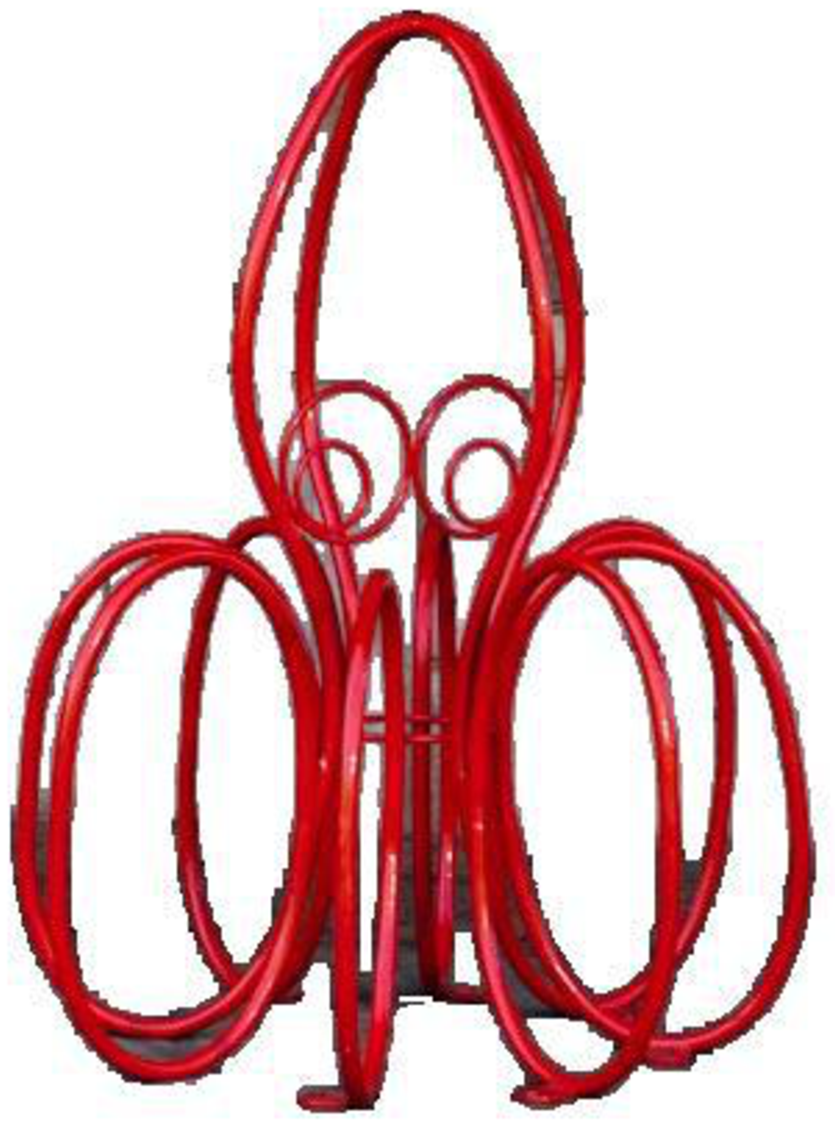}\\
     smooth gap &
     $\nu_s=3.4\cdot 10^{6}$ & 
     $\nu_s= 4.4\cdot 10^{5}$ & $\nu_s= 4.17\cdot 10^{5}$ & $\nu_s= 8.05 \cdot 10^{4}$ \\
     discrete gap & $\nu_d=4.6\cdot 10^3$ 
     & $\nu_d=5.5\cdot 10^{2}$ & $\nu_d=6.6\cdot 10^{3}$ & $\nu_d=5.9\cdot 10^{-1}$
   \end{tabular}
 }
 
   \vspace*{-.4cm}
   
   \caption{\small Segmentation results for the slowest and fastest projection method, with smooth ($\nu_s$) and discrete ($\nu_d$) duality gaps. Note how the background noise disappears only for small duality gaps.}
   \label{fig:segs}
     \vspace*{-.3cm}
 \end{figure}

\textbf{Two functions ($r=2$).} Figure~\ref{fig:numiters} shows the duality gaps for the discrete and smooth (where applicable) problems for two instances of segmentation problems. The algorithms working with the proximal problems are much faster than the ones directly solving the nonsmooth problem. In particular DR converges extremely fast, faster even than BCD which is known to be a state-of-the-art algorithms for this problem \cite{barbero2011fast}. This, in itself, is a new insight for solving TV. 
We also see that the discrete gap shrinks faster than the smooth gap, i.e., the optimal discrete solution does not require to solve the smooth problem to extremely high accuracy. Figure~\ref{fig:segs} illustrates example results for
different gaps.

\textbf{More functions ($r>2$).} Figure~\ref{fig:numiters2} shows example results for four problems of sums of concave and cut functions. Here, we can only run DR-para. Overall, BCD, DR-para and the accelerated gradient method perform very well.

\begin{figure}

   \vspace*{-.2cm}

   \centering
\hspace*{-1cm}   \includegraphics[width=0.3\textwidth]{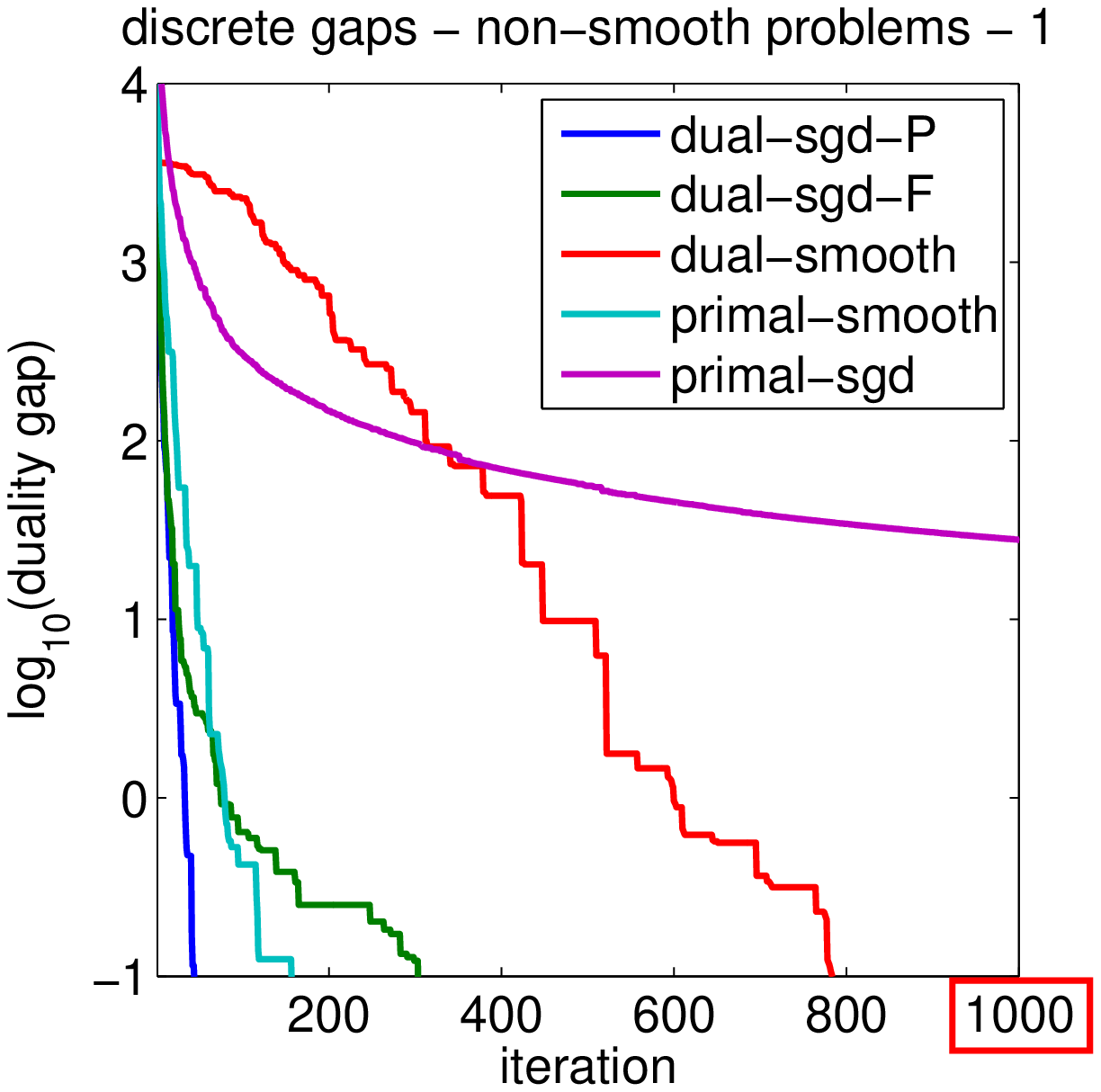}
   \includegraphics[width=0.3\textwidth]{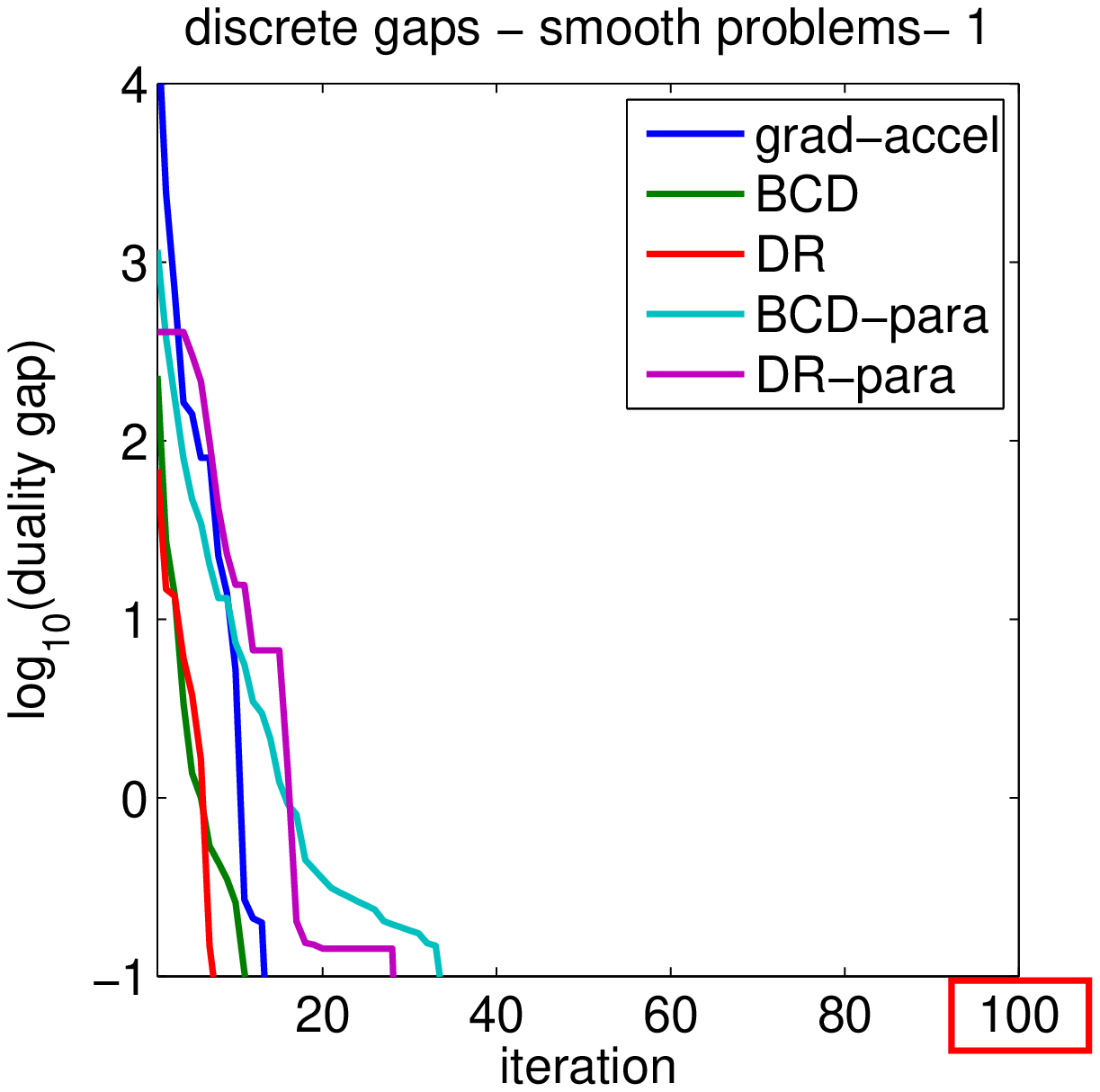}
   \includegraphics[width=0.3\textwidth]{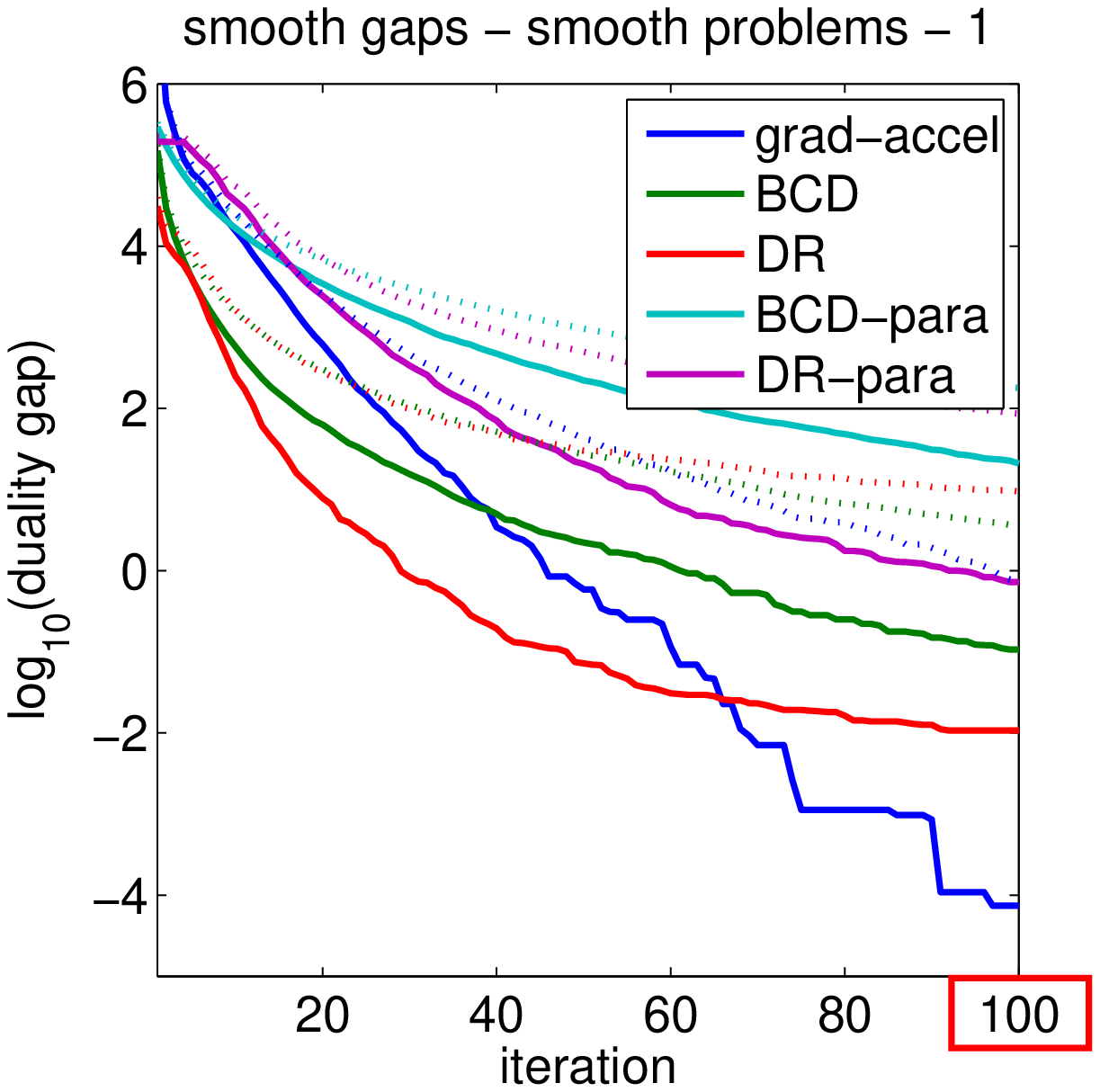}
   \hspace*{-1cm}   
 
  \hspace*{-1cm}   \includegraphics[width=0.3\textwidth]{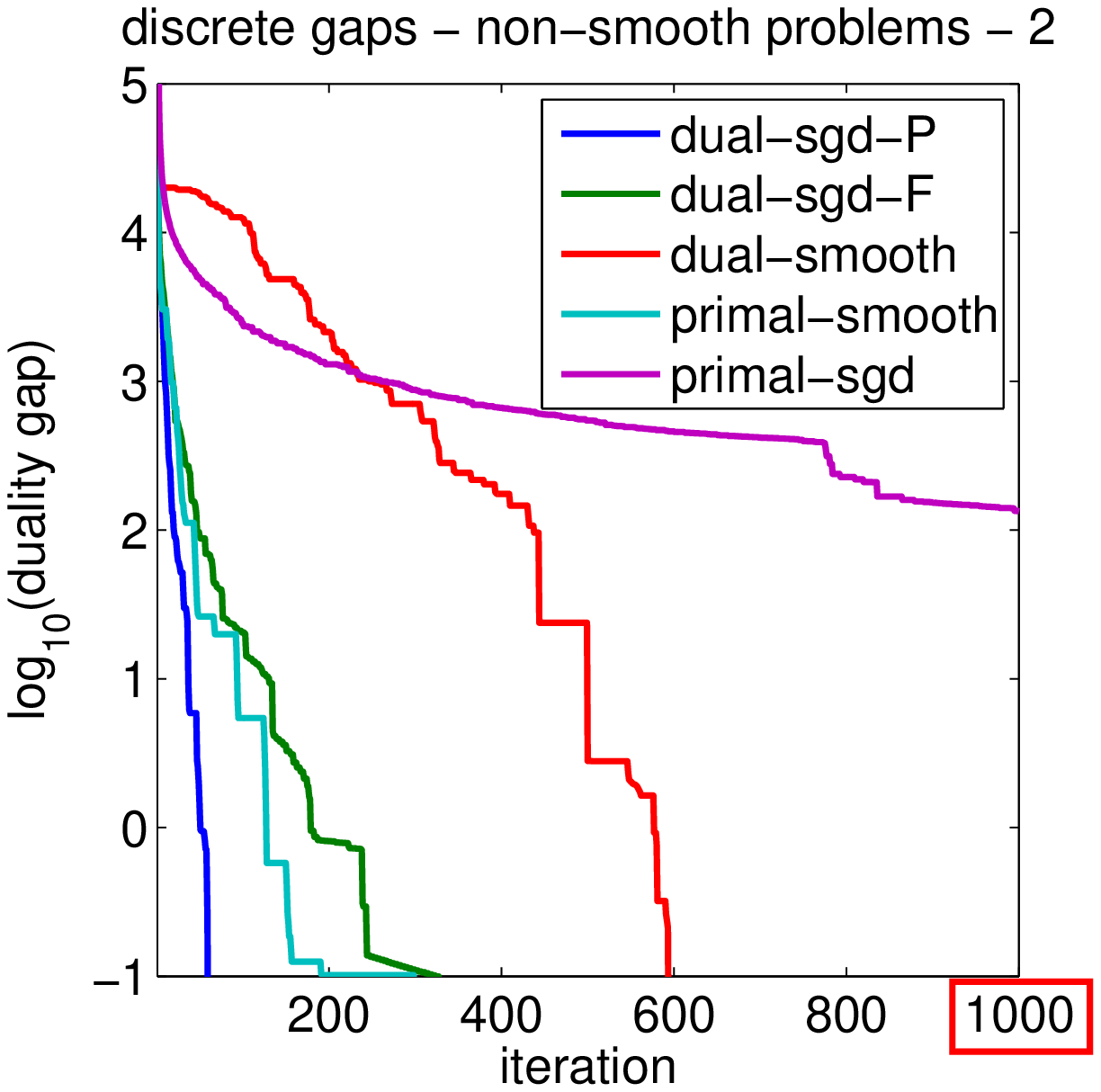}
    \includegraphics[width=0.3\textwidth]{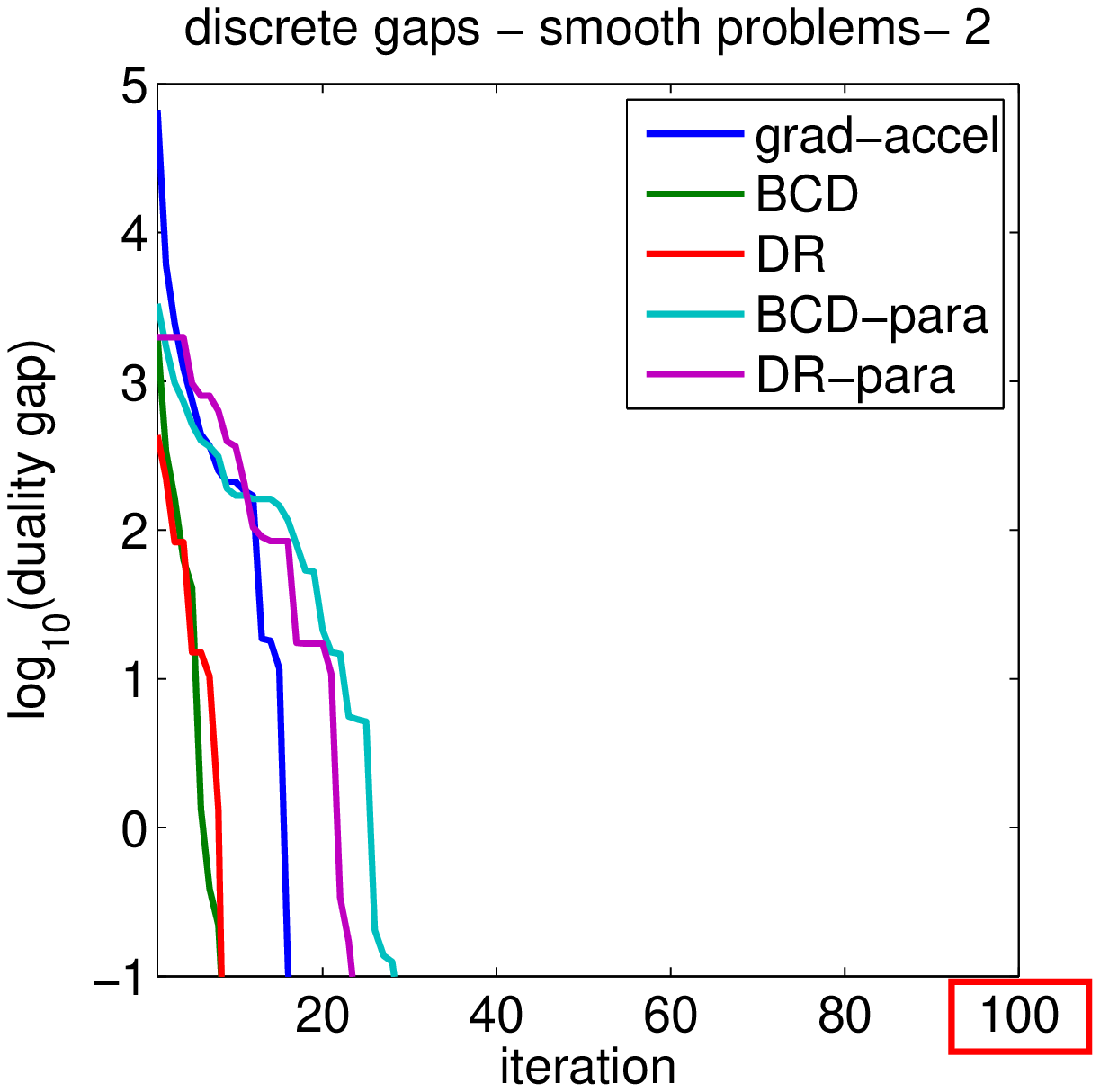}
    \includegraphics[width=0.3\textwidth]{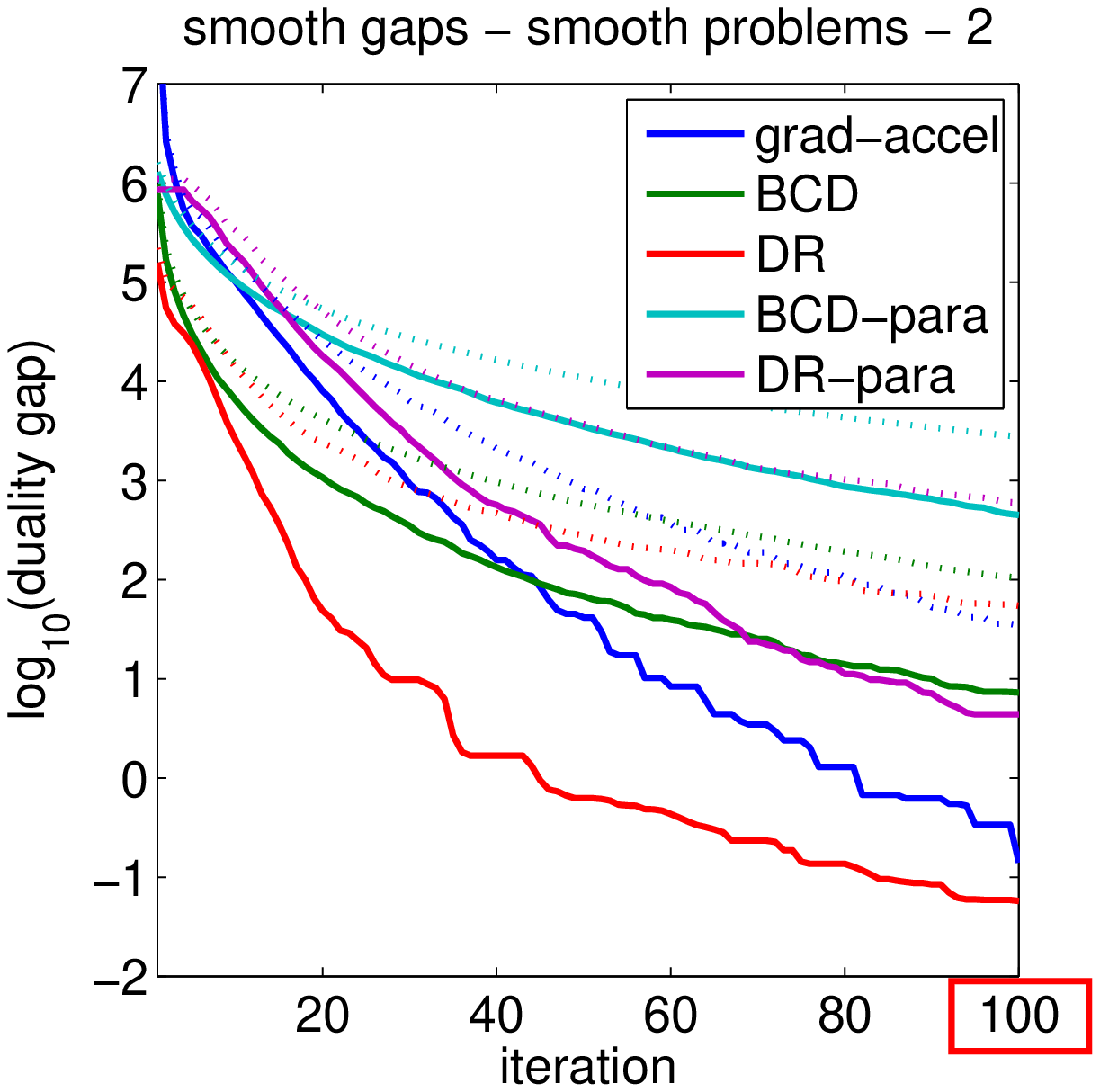}
    \hspace*{-1cm}   
 
  \hspace*{-1cm}   \includegraphics[width=0.3\textwidth]{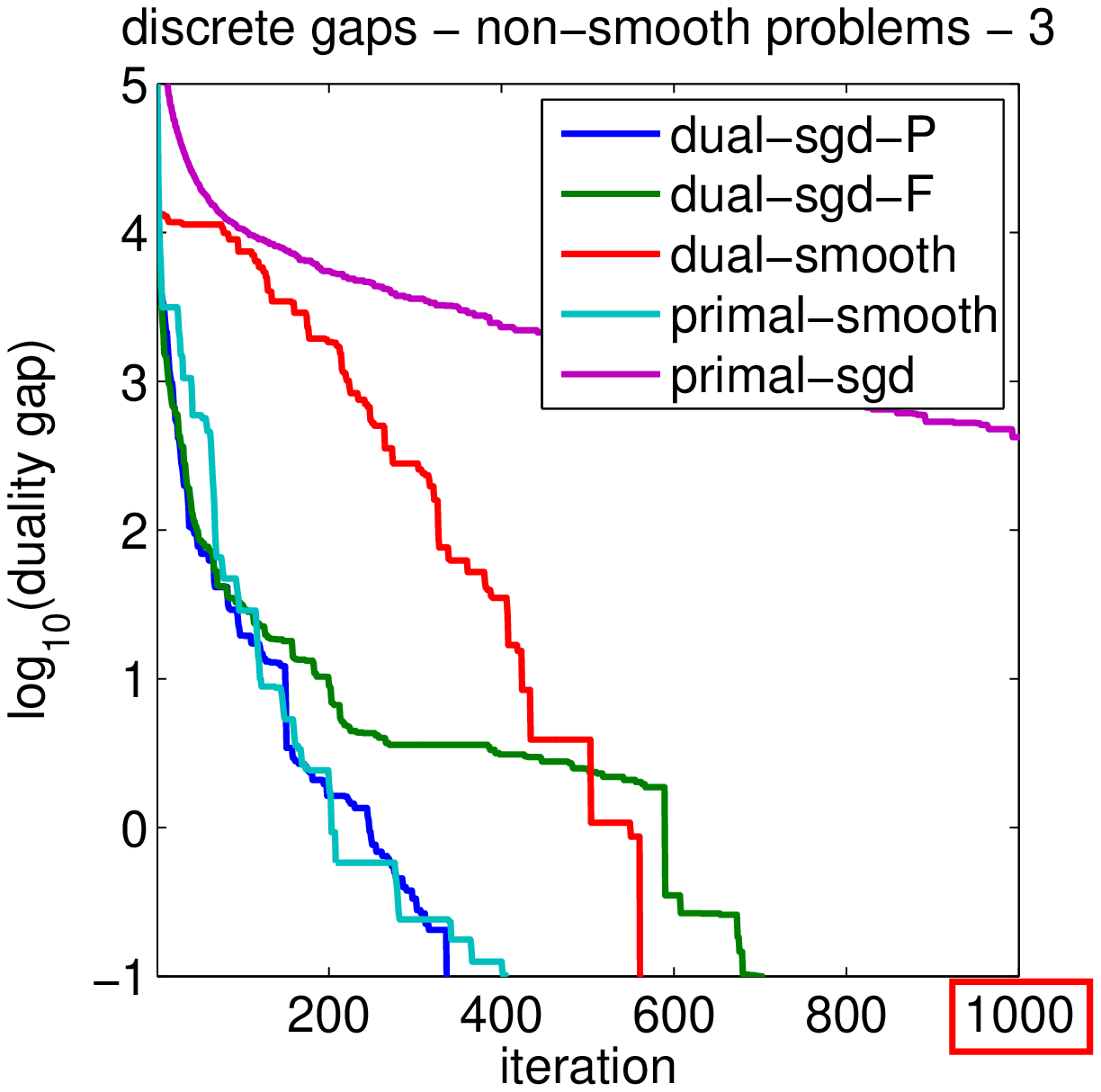}
    \includegraphics[width=0.3\textwidth]{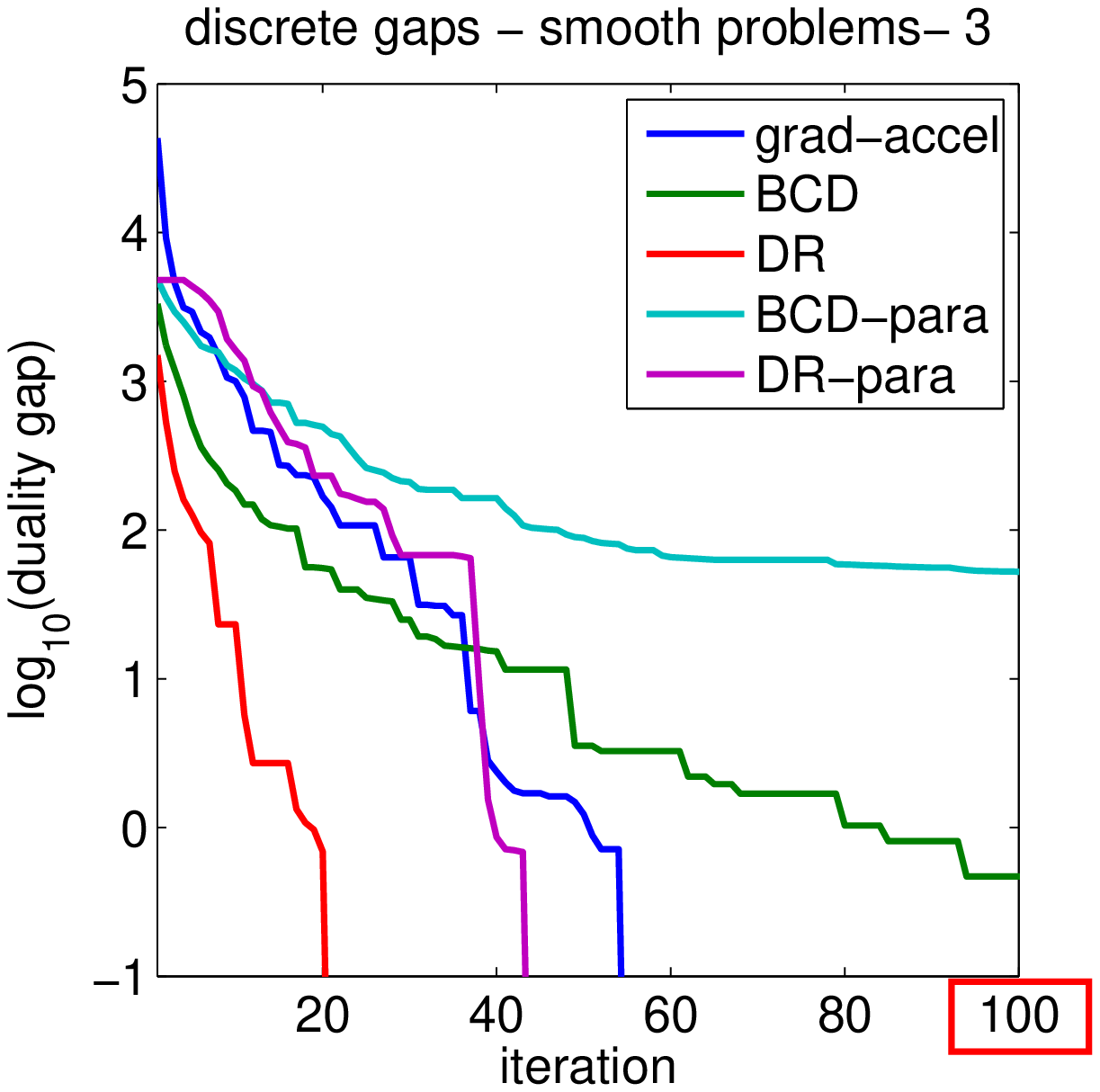}
    \includegraphics[width=0.3\textwidth]{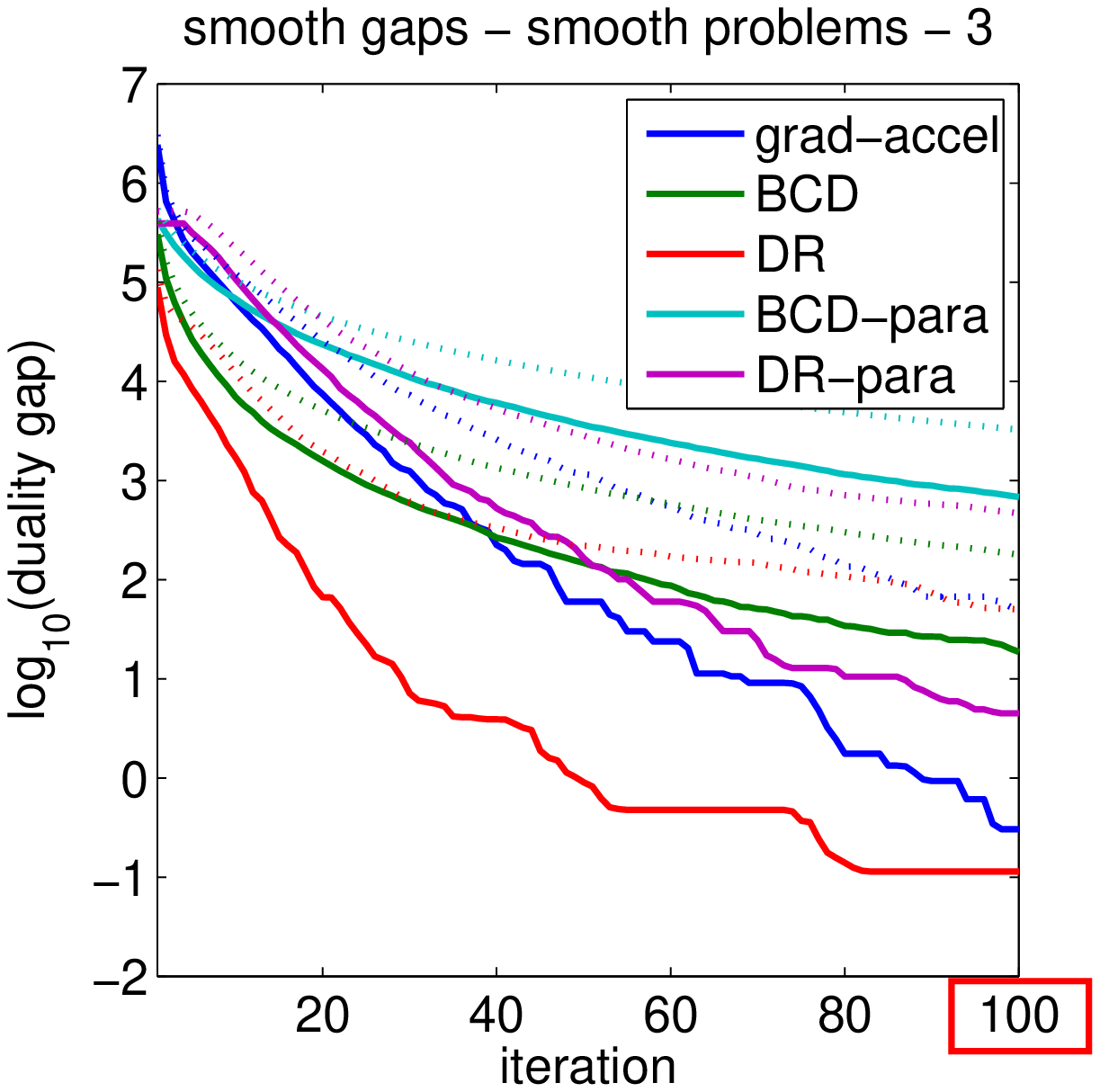}
    \hspace*{-1cm}

 \hspace*{-1cm}   \includegraphics[width=0.3\textwidth]{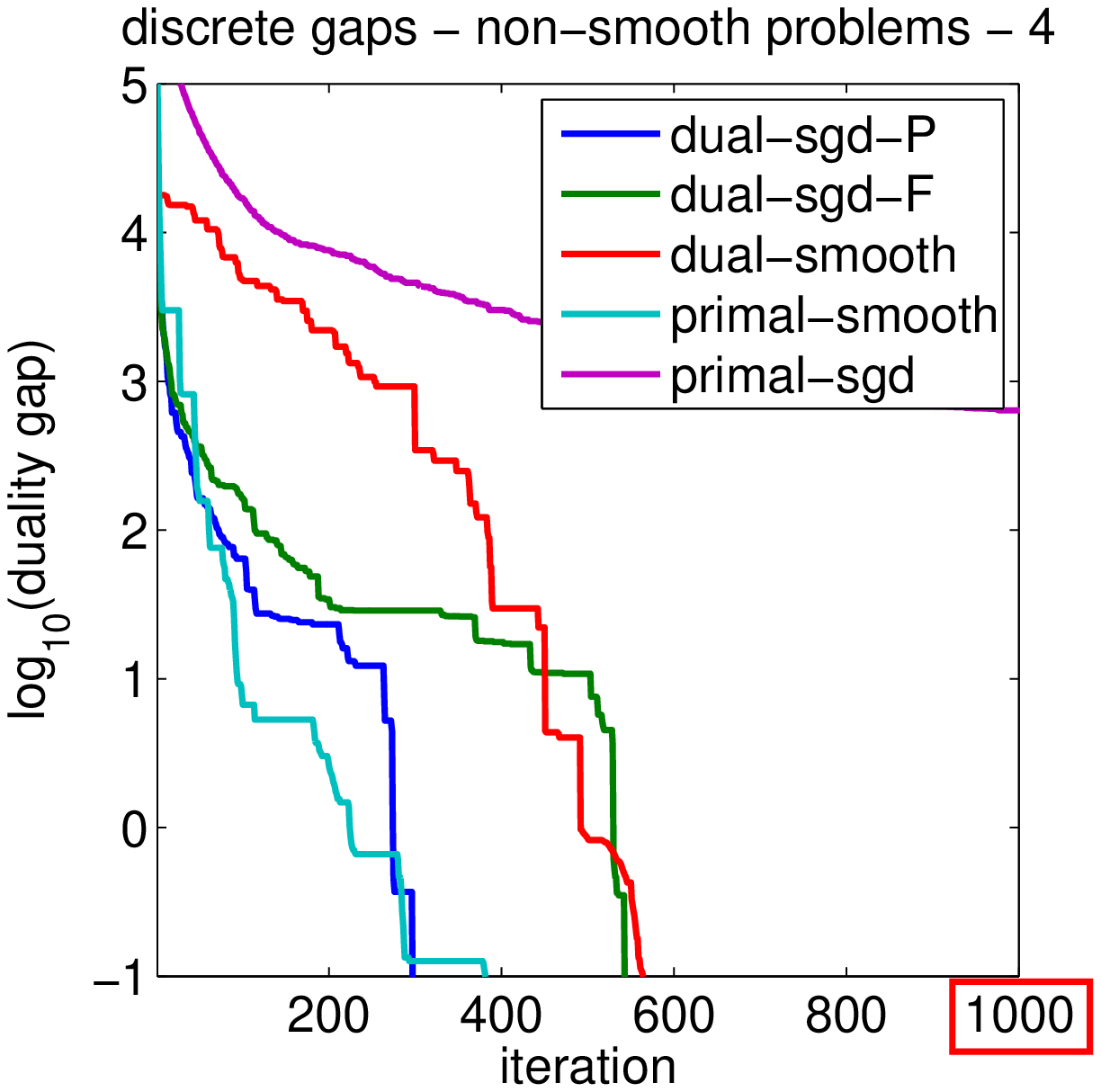}
   \includegraphics[width=0.3\textwidth]{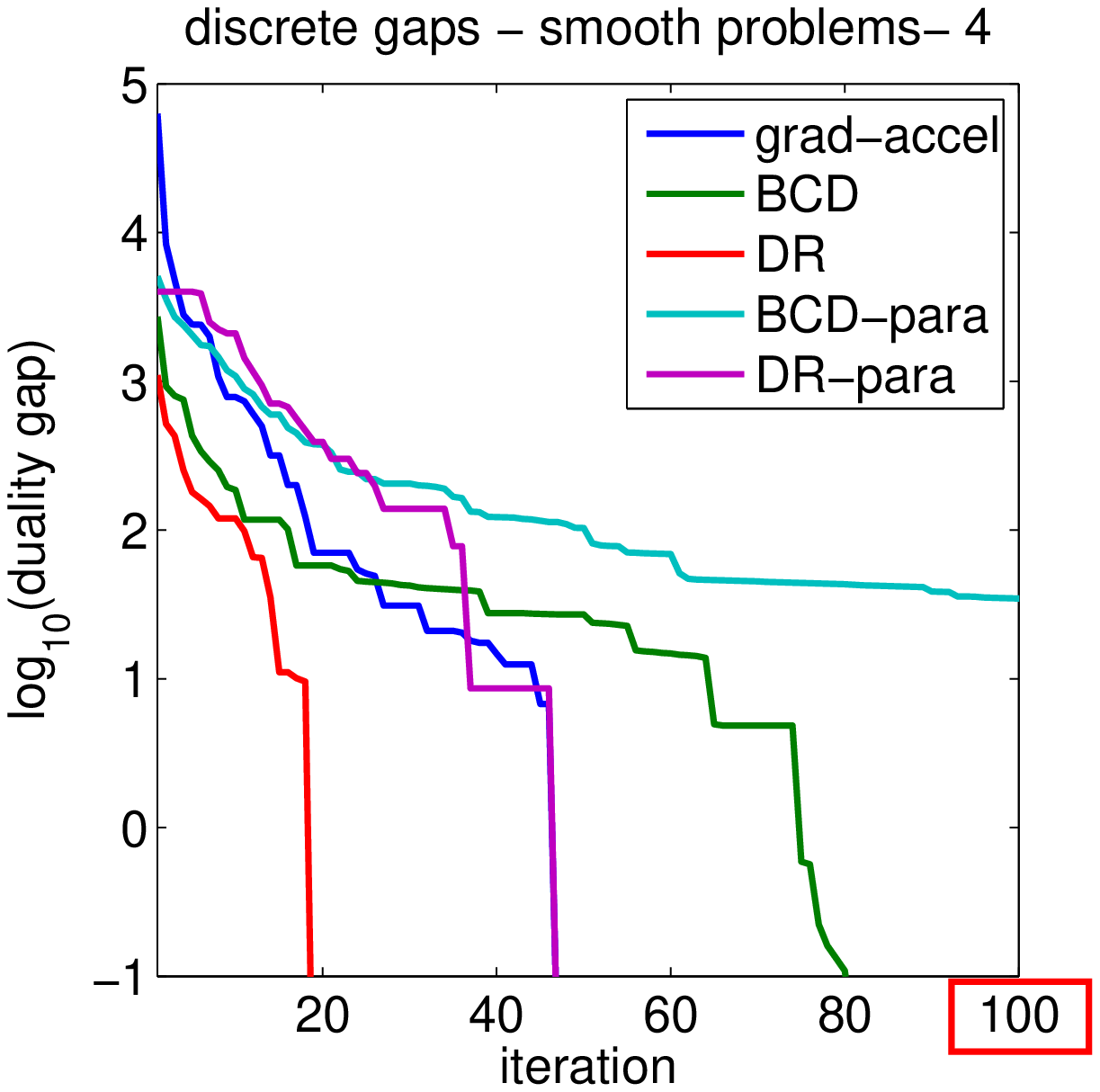}
   \includegraphics[width=0.3\textwidth]{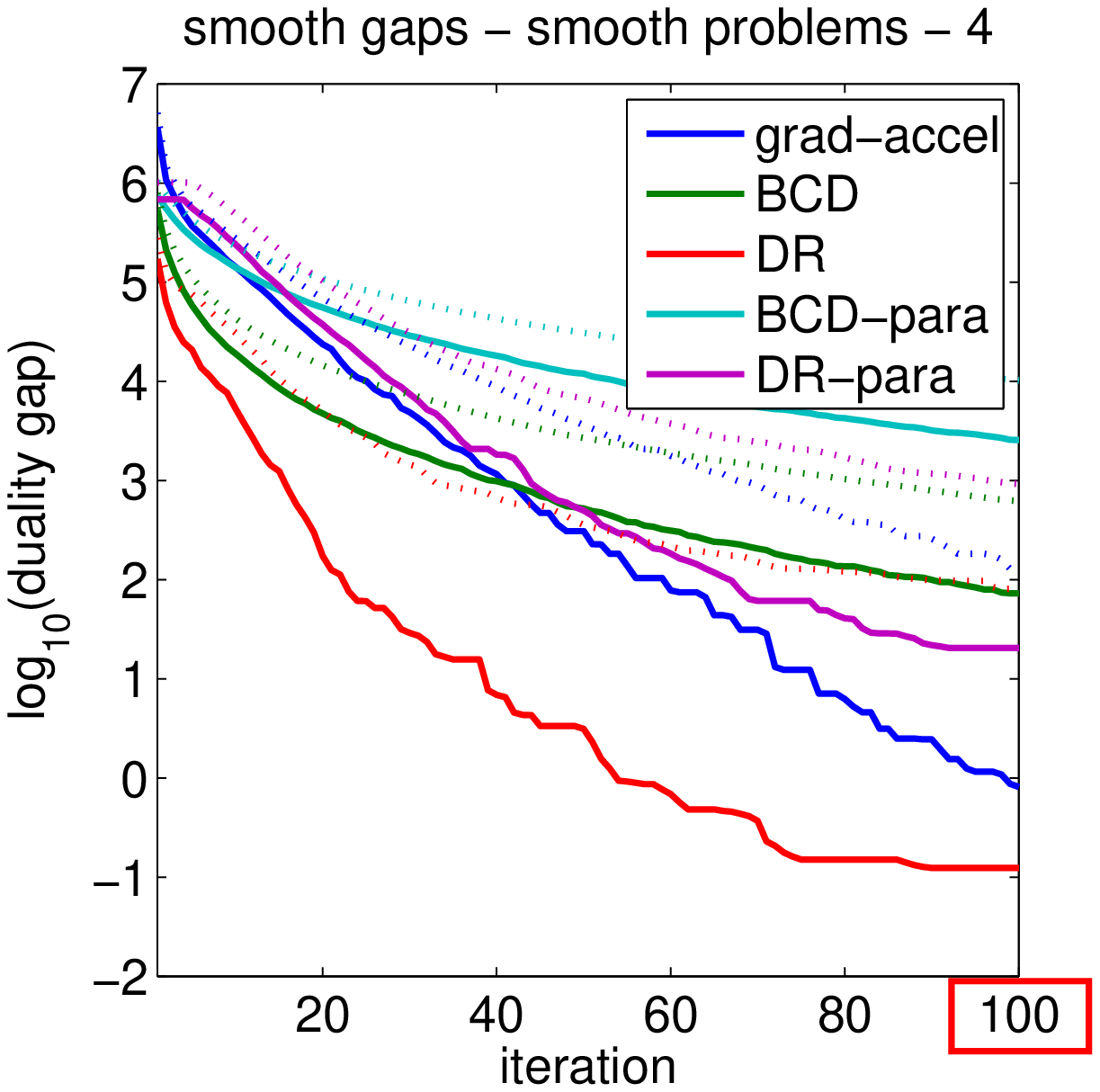}
   \hspace*{-1cm}   
   
   \vspace*{-.3cm}
   
   \caption{Comparison of convergence behaviors. Left: discrete duality gaps for various optimization schemes for the nonsmooth problem, from 1 to 1000 iterations.
   Middle: discrete duality gaps for various optimization schemes for the smooth problem, from 1 to 100 iterations. Right: corresponding continuous duality gaps. From top to bottom: four different images.}
   \label{fig:numiters}
 \end{figure}

 \begin{figure}
 
 \vspace{-2pt}
   \centering
\hspace*{-1cm}   
\includegraphics[width=0.25\textwidth]{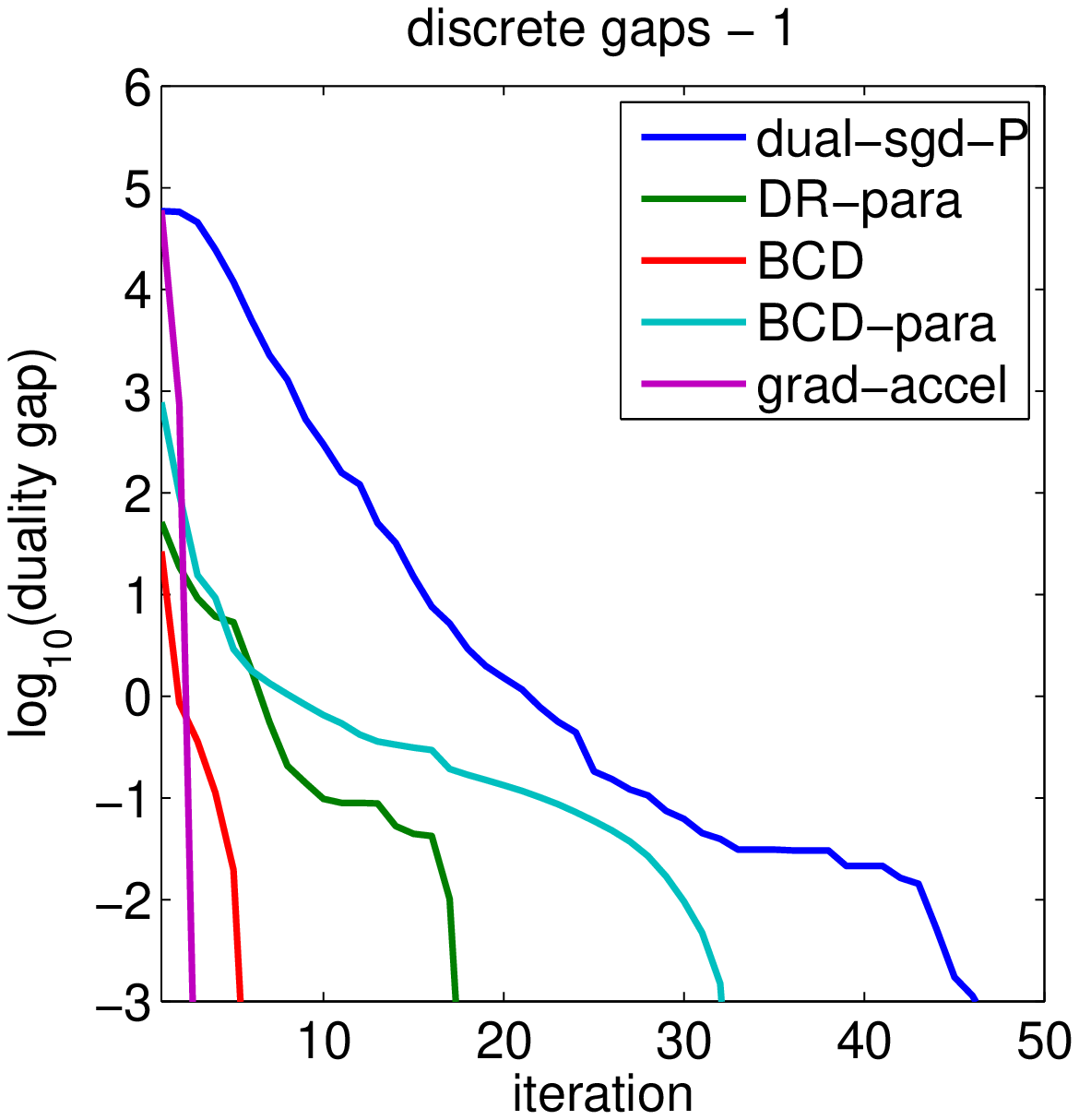}
   \includegraphics[width=0.25\textwidth]{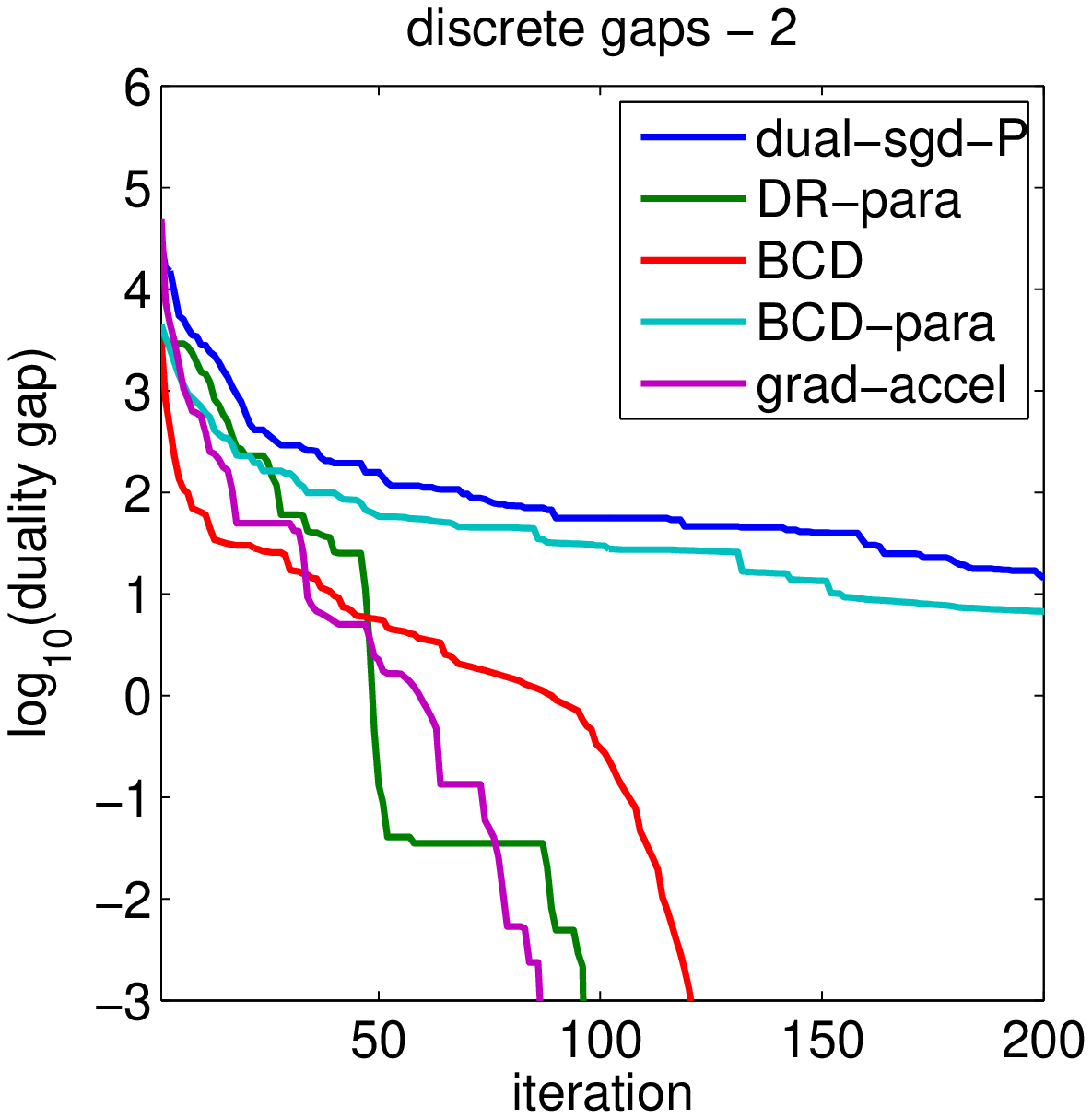}
   \includegraphics[width=0.25\textwidth]{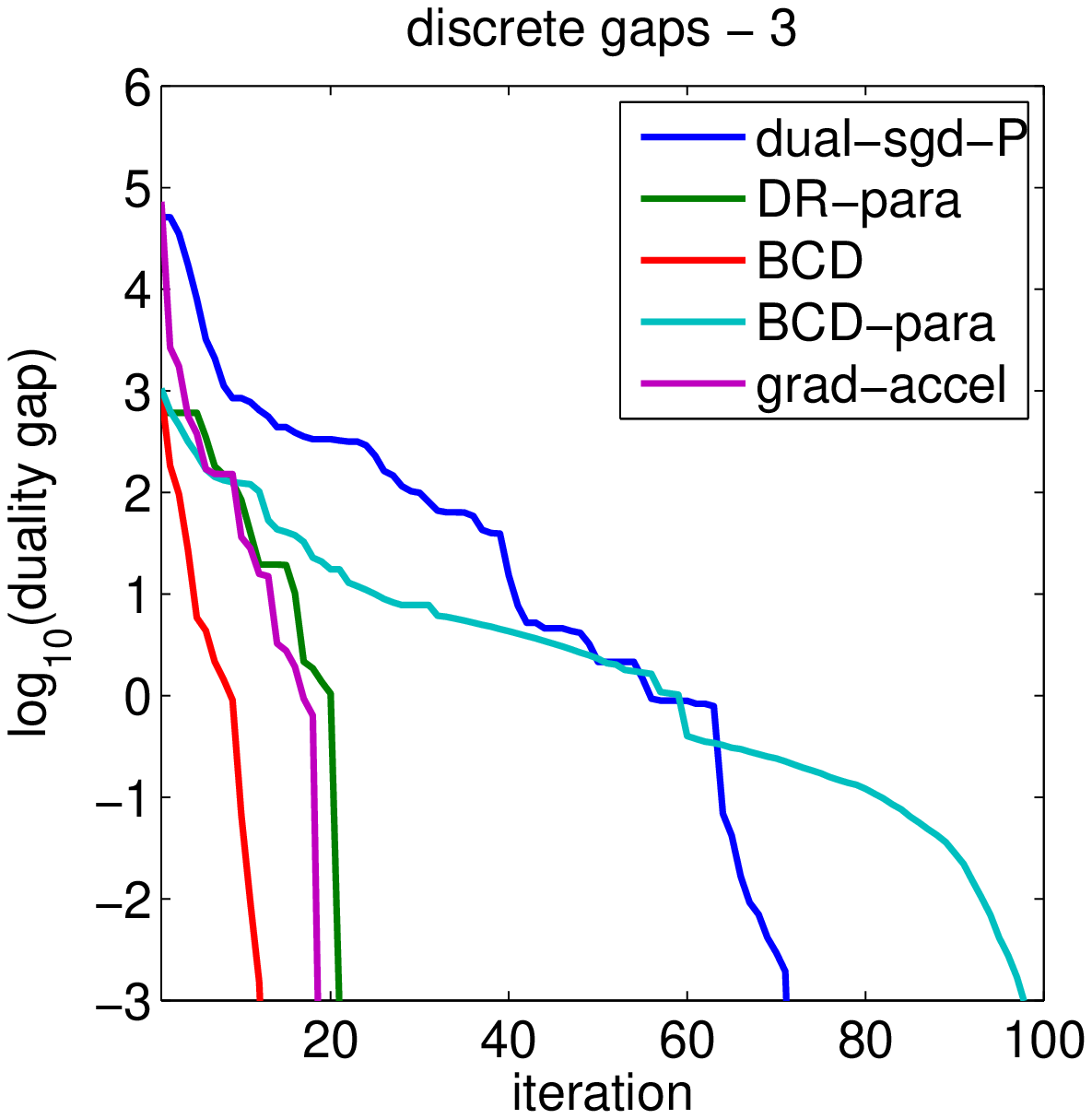}
   \includegraphics[width=0.25\textwidth]{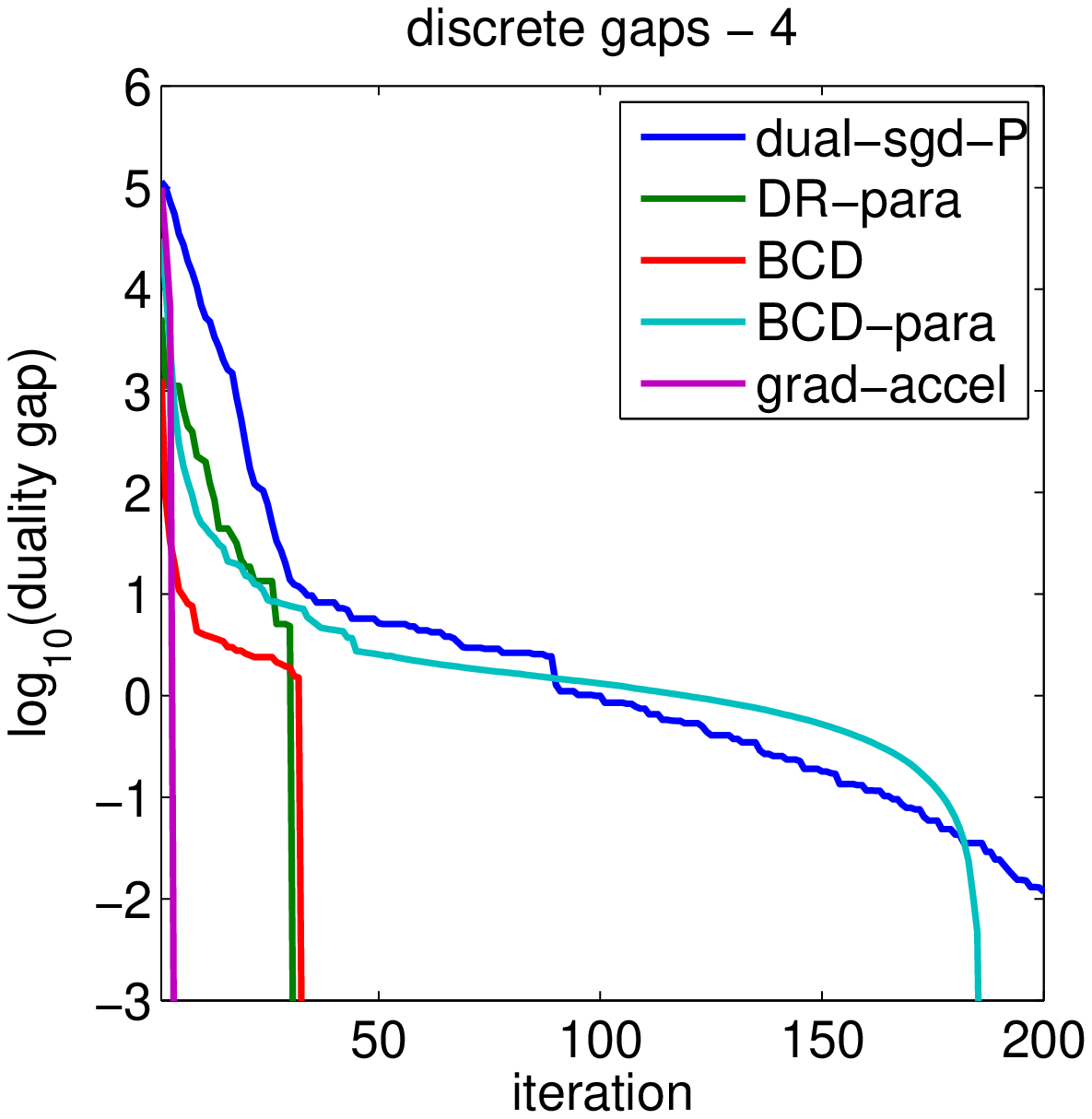}
   \hspace*{-1cm}  
   \vspace{-9pt}
   \caption{Convergence behavior for graph cut plus concave functions. 
   }
   \label{fig:numiters2} 
   \vspace*{-.15cm}
     
 \end{figure}

\paragraph{Parallel speedups} If we aim for parallel methods, then again DR outperforms BCD. Figure~\ref{fig.par} (right) shows the speedup gained from parallel processing for $r=2$. Using 8 cores, we obtain a 5-fold speed-up.

\begin{figure} 
  \centering
  \includegraphics[width=0.3\textwidth]{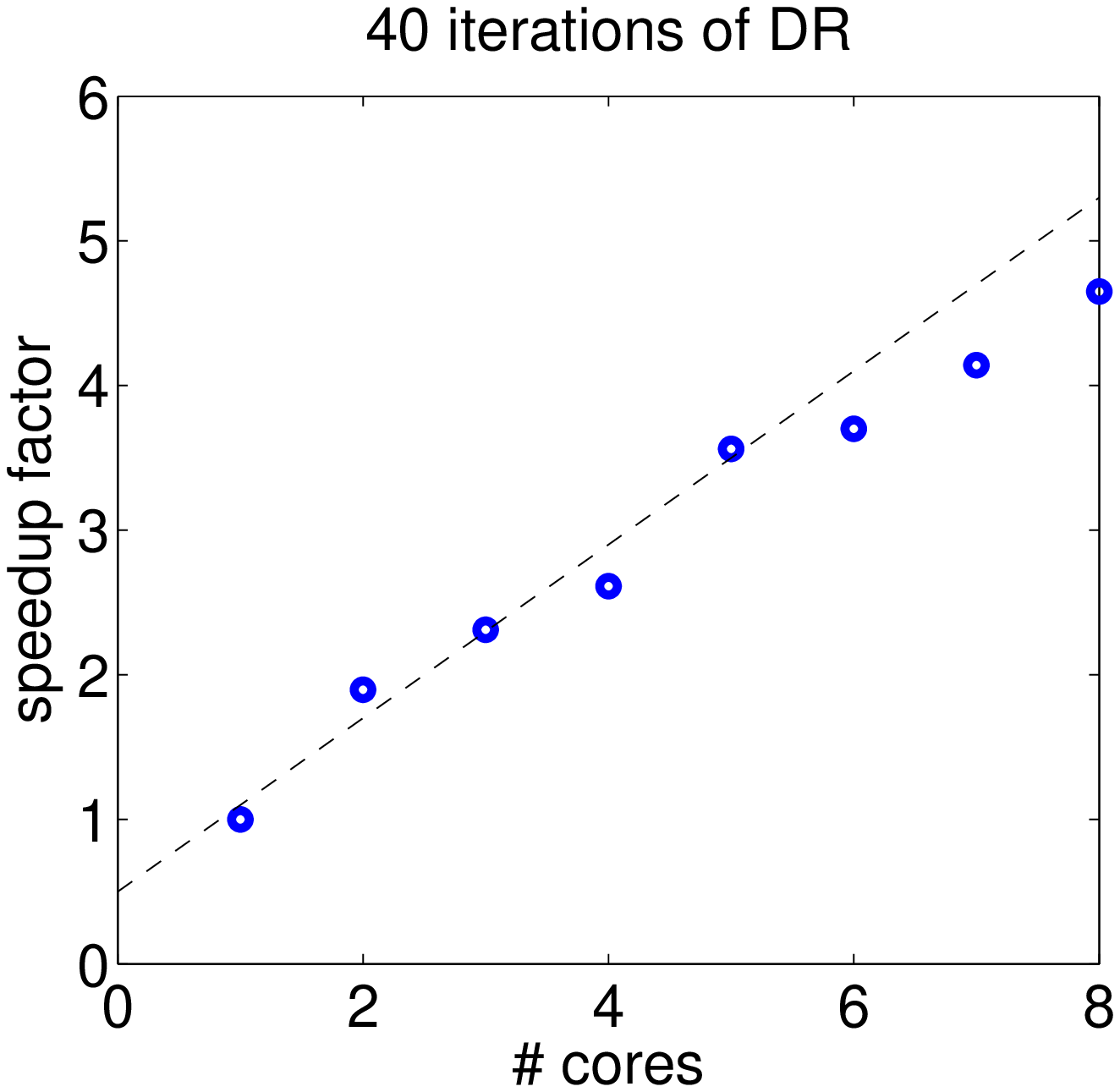}
  \includegraphics[width=0.3\textwidth]{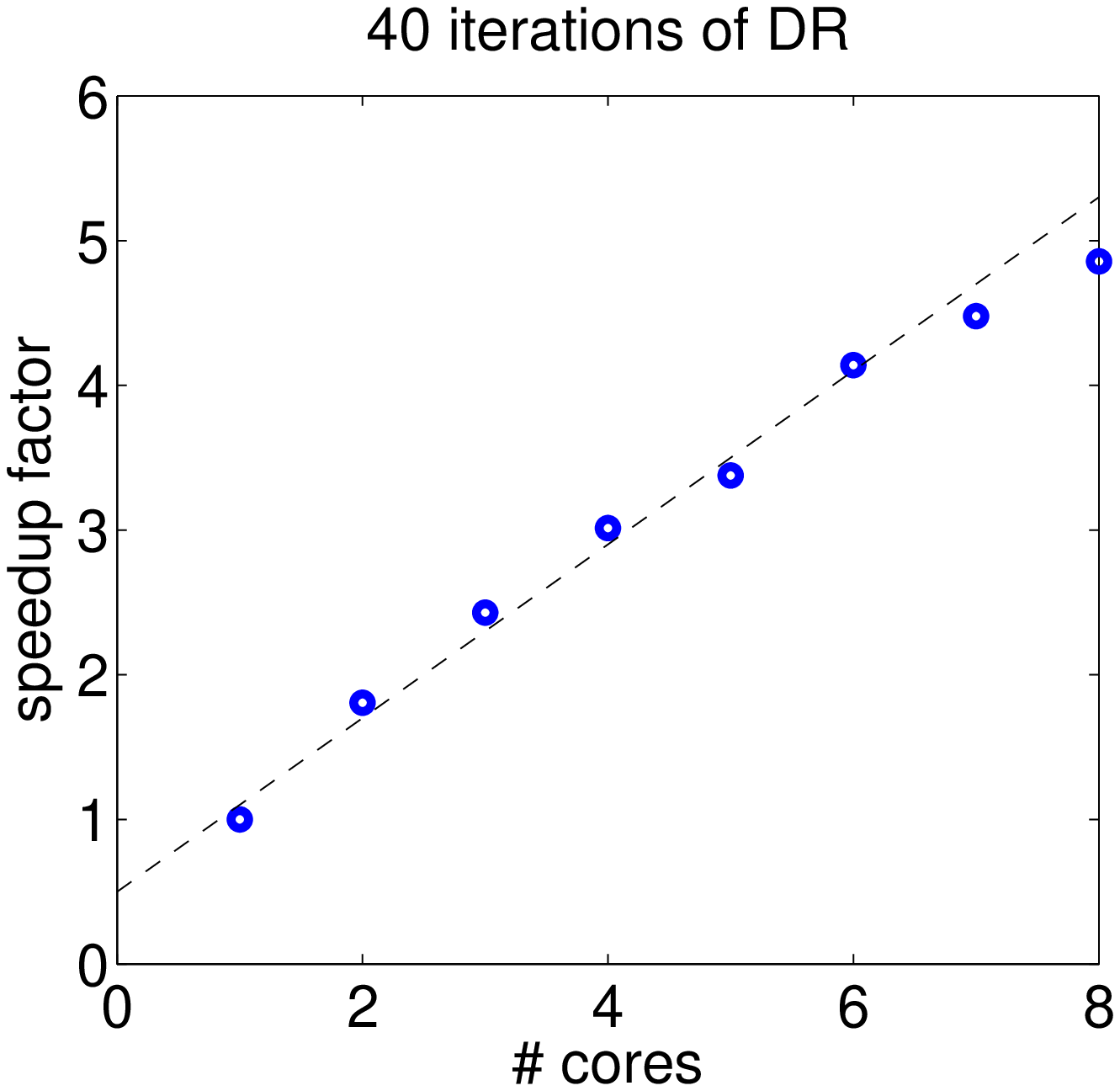}
  \caption{Speedup due to parallel processing for two instances.}
  \label{fig.par}
\end{figure}

\paragraph{Running time compared to graph cuts}
Table~\ref{tab:gcutcomp} shows the running times of our DR method (implemented in Matlab/C++) and the Maxflow code of \cite{boykov2001fast,boykov04,kolmogorov04} (using the wrapper \cite{Bagon2006}) for the four graph cut (segmentation) instances above on a MacBook Air with a 2 GHz Intel Core i7. The running times are averages over 5 repetitions. DR was run for 10, 10, 21, and 20 iterations, respectively.

DR is by a factor of 2-9 slower than the specialized code. Given that, as opposed to the combinatorial algorithm, DR solves the full regularization path, is parallelizable, generic and straightforwardly extends to a variety of functions, this is remarkable.

\begin{table}
  \centering
  \begin{tabular}[bla]{lrrrr}
    & Maxflow &    DR 1-thread & DR 2-thread & DR 4-thread\\ \hline
    image 1 & 0.39 & 1.61 (4.13) &   0.93 (2.39) & 0.65 (1.66) \\
    image 2 & 0.32 & 1.74 (5.45) & 0.99 (3.10) &  0.69  (2.16)  \\
    image 3 & 0.40 & 3.45 (8.61) &  1.93 (4.82) & 1.31  (3.27) \\
    image 4 & 0.38 & 3.38 (8.88) &   1.90 (5.00) &  1.29 (3.38) \\ \hline
    average & 0.37 & 2.55 & 1.44 & 0.98\\
  \end{tabular}
  \caption{Running times (in seconds) for the optimized C++ Maxflow code of \cite{boykov2001fast,boykov04,kolmogorov04} and our DR for graph cut using one or multiple threads. The last row is the average, and the numbers in parentheses indicate the factor relative to the Maxflow time.}
  \label{tab:gcutcomp}
\end{table}

In summary, our experiments suggest that projection methods can be extremely useful for solving the combinatorial submodular minimization problem. Of the tested methods, DR, cyclic BCD and accelerated gradient perform very well. For parallelism, applying DR on \eqref{eq:dualsmoothr2} converges much faster than BCD on the same problem.

\vspace*{-.25cm}
\section{Conclusion}
\vspace*{-.25cm}
We have presented a novel approach to submodular function minimization based on the equivalence with a best approximation problem. The use of reflection methods avoids any hyperparameters and reduce the number of iterations significantly, suggesting the suitability of reflection methods for combinatorial problems. Given the natural parallelization abilities of our approach, it would be interesting to perform detailed empirical comparisons with existing parallel implementations of graph cuts (e.g.,~\cite{shekhovtsov2011distributed}). Moreover, a generalization beyond submodular functions of the relationships between combinatorial optimization problems and convex problems would enable the application of our framework to other common situations such as multiple labels~(see, e.g.,~\cite{komodakis2011mrf}).

\vspace*{-.25cm}

\paragraph{Acknowledgments.} {\small This research was in part funded by the Office of Naval Research under contract/grant number N00014-11-1-0688,  by NSF CISE Expeditions award CCF-1139158, by  DARPA XData Award FA8750-12-2-0331, and the European Research Council (SIERRA project), as well as gifts from Amazon Web Services, Google, SAP, Blue Goji, Cisco, Clearstory Data, Cloudera, Ericsson, Facebook, General Electric, Hortonworks, Intel, Microsoft, NetApp, Oracle, Samsung, Splunk, VMware and Yahoo!.
We would like to thank Martin Jaggi, Simon Lacoste-Julien and Mark Schmidt for discussions.
}

\vspace*{-.25cm}

{

\bibliographystyle{plainnat}
\bibliography{refs2}
}

\appendix

\newcommand{\Cc}{\mathcal{C}}

\section{Derivations of Dual Problems}\label{sec:duals}

\subsection{Proof of Lemma 1}
\begin{proof}
  To derive the non-smooth dual problem, we
  follow~\cite{komodakis2011mrf} and use Lagrangian duality:
  \begin{eqnarray*}
    \min_{x \in [0,1]^n} f(x)  
    & = & \min_{x \in [0,1]^n} \nlsum_{j=1}^r f_j(x) 
    = \min_{x_1,\dots,x_r \in [0,1]^n }   \nlsum_{j=1}^r f_j(x_j) \mbox{ such that } x_1 = \dots = x_{r} \\
    & = & \min_{x \in \rb^n, \ x_1,\dots,x_r \in [0,1]^n } \max_{(\lambda_j)} \nlsum_{j=1}^r f_j(x_j) + \nlsum_{j=1}^r \lambda_j^\top ( x - x_j)\\
    & = &  \max_{\sum_{j=1}^r \lambda_j = 0} \nlsum_{j=1}^r \min_{x_j \in [0,1]^n}
    \big\{
    f_j(x_j)-   \lambda_j^\top x_j  \big\} \\
    & = &  \max_{\sum_{j=1}^r \lambda_j = 0} \nlsum_{j=1}^r \max_{y_j \in B(F_j)}
    (y_j  - \lambda_j)_-(V) =   \max_{\sum_{j=1}^r \lambda_j = 0} \nlsum_{j=1}^r g_j(\lambda_j), 
  \end{eqnarray*}
  where $g_j(\lambda_j) = \min_{A \subset V} F_j(A) - \lambda_j(A)$ is
  a nonsmooth concave function, which may be computed efficiently through submodular function minimization.
\end{proof}

\subsection{Proof of Lemma 2}
\begin{proof}
  The proof follows a similar saddle-point approach.
\begin{eqnarray}
\nonumber\min_{x \in \rb^n} f(x)  + \half \|x\|_2^2
& = & \min_{x \in \rb^n} \nlsum_{j=1}^r f_j(x)  + \half \|x\|_2^2 \\
\nonumber & = & \min_{x_1,\dots,x_r \in \rb^n }   \nlsum_{j=1}^r \Big\{  f_j(x_j) 
+ \frac{1}{2r} \| x_j\|_2^2 \Big\}\ \mbox{ such that } x_1 = \dots = x_{r} \\
\nonumber & = & \min_{x \in \rb^n, \  x_1,\dots,x_r \in \rb^n } \max_{\lambda_j} \nlsum_{j=1}^r \bigl\{  f_j(x_j)  + \frac{1}{2r} \| x_j\|_2^2 +  \lambda_j^\top ( x - x_j) \bigr\}\\
\nonumber & = &  \max_{\sum_{j=1}^r \lambda_j = 0} \nlsum_{j=1}^r \min_{x_j \in \rb^n}
\Big\{
 f_j(x_j)-    \lambda_j ^\top x_j  + \tfrac{1}{2r} \| x_j\|_2^2 \Big\} \\
 \nonumber & = &  \max_{\sum_{j=1}^r \lambda_j = 0} \nlsum_{j=1}^r \min_{x_j \in \rb^n}
\Big\{
\max_{ y_j \in B(F_j)} x_j^\top y_j -    \lambda_j ^\top x_j  + \tfrac{1}{2r} \| x_j\|_2^2 \Big\} \\
\nonumber & = &  \max_{\sum_{j=1}^r \lambda_j = 0} \nlsum_{j=1}^r \max_{y_j \in B(F_j)}
-\tfrac{r}{2} \| y_j   - \lambda_j\|_2^2
\\
\label{eq:dualsmooth2}  & = &  
\max_{\sum_{j=1}^r \lambda_j = 0} \max_{y_j \in B(F_j)}
-\frac{r}{2} \sum_{j=1}^r  \| y_j  - \lambda_j\|_2^2.
\end{eqnarray}
Writing \eqref{eq:dualsmooth2} as a minimization problem and ignoring constants completes the proof.
\end{proof}

\section{Divide-and-conquer algorithm for parametric submodular minimization}\label{sec:divconquer}

\subsection{Description of the algorithm}

The optimal solution $x^*$ of our proximal problem $\min_{x \in \rb^n} f(x) + \|x\|^2$ indicates the minimizers of $F(S) - \lambda|S|$ for all $\lambda \in \mathbb{R}$. Those minimizers form a chain $S_\emptyset \subset S_1 \subset \ldots \subset S_k = V$.
The solutions are the level sets of the optimal solution $x^*$.

Here, we extend the approach of~\citet{tarjan2006balancing} for parametric max-flow to all submodular functions and all monotone strictly convex functions beyond the square functions used in the main paper. More precisely, we consider a submodular function $F$ defined on $V = \{1,\dots,n\}$ and $n$ differentiable strictly convex functions $h_i$ such that their Fenchel-conjugates $h_i^\ast$ have full domain, for $i \in \{1,\dots,n\}$. The functions $h^\ast_i$ are then differentiable. We consider the following problem:
 \BEA
 \label{eq:fullprimal}
 \min_{x \in \rb^n} f(x) + \sum_{i=1}^n h_i(x_i)
 & = &  \min_{x \in \rb^n} \max_{y \in  B(F)} y^\top x   + \sum_{i=1}^n h_i(x_i) \\
 & = &  \max_{y \in  B(F)}   \min_{x \in \rb^n} y^\top x   + \sum_{i=1}^n h_i(x_i) \\
 \label{eq:fulldual}
 & = &  \max_{y \in  B(F)}   - \sum_{i=1}^n h_i^\ast(-y_i).
\EEA
The optimality conditions are
\begin{enumerate}
\item $y \in B(F)$,
\item $y^\top x = f(x)$,
\item $-y_i = h_i'(x_i) \Leftrightarrow x_i =
  (h_i^\ast)'(-y_i)$.
\end{enumerate}

Let $\tau(V)$ be the time for minimizing the submodular function $F(S) + a(S)$ on the ground set $V$ (for any $a \in \rb^n$). 
For our complexity analysis, we make the assumption that minimizing the (contracted) function $F^{S,a}(T) \triangleq F(S \cup T) - F(S) + a(T)$ on the smaller ground set $U \subseteq V\setminus S$ (for any $a \in \rb^n$, $S \subseteq V$, $U \subseteq V \backslash S$) takes time at most $\tfrac{|U|}{|V|}\tau(V)$. This is a reasonable assumption, because it essentially says that $\tau(V)$ grows at least linearly in the size of $V$. To our knowledge, even fast algorithms for special submodular functions take at least linear time.

We will also use the notation $F(S \mid T) \triangleq F(S \union T) - F(S)$. 
For recursions, we use the \emph{restriction} 
$F_S: 2^S \to \mathbb{R}$, $F_S(T) = F(T)$ of $F$ to $S$ and the \emph{contraction} $F^S: 2^{V\setminus S} \to \mathbb{R}$, $F^S(T) = F(T \mid S)$ of $F$ on $S$.

\begin{algorithm}
  \SetAlgoLined
  \SetKwFunction{FSplit}{SplitInterval}
  \DontPrintSemicolon
  \FSplit($\lambda_{\min}$, $\lambda_{\max}$, $V$, $F$, $i$)\;
    \eIf{$i$ even}{
      \tcp{unbalanced split}
      $\lambda \gets \argmin_{\lambda} \sum_i h_i(\lambda) - \lambda F(V)$\;
      $A \gets \argmin_{T \subseteq V} F(T) + \sum_{i \in T}h'_i(\lambda)$\;
      \If{$S = \emptyset$ or $S = V$}{
        return $x = \lambda \mathbf{1}_V$\;
      }
    }{
      \tcp{balanced split}
      $\lambda \gets (\lambda_{\min} + \lambda_{\max})/2$\;
      $S \gets \argmin_{T \subseteq V} F(T) + \sum_{i \in T}h'_i(\lambda)$\;
      \If{$S = \emptyset$}{
        $x \gets$ \FSplit($\lambda_{\min}$, $\lambda$, $V$, $F$, $i+1$)\;
        return $x$\;
      }
      \If{$S = V$}{
        $x \gets$ \FSplit($\lambda$, $\lambda_{\max}$, $V$, $F$, $i+1$)\;
        return $x$\;}
    }
    \tcp{$S \neq \emptyset$ and $S \neq V$}
    $x_S \gets$ \FSplit($\lambda_{\min}$, $\lambda$, $S$, $F^A$, $i+1$)\;
    $x_{V\setminus S} \gets$ \FSplit($\lambda$, $\lambda_{\max}$, $V \setminus S$, $F_S$, $i+1$)\;
    return $[x_S, x_{V \setminus S}]$\;
  \caption{Recursive Divide-and-Conquer}
  \label{alg:divconquer}
\end{algorithm}

Algorithm~\ref{alg:divconquer} is a divide-and-conquer algorithm. In each recursive call, it takes an interval $[\lambda_{\min}, \lambda_{\max}]$ in which all components of the optimal solution lie and either 
(a) shortens the search interval for any break point, (b) finds the optimal (constant) value of $x$ on a range of elements, or (c) recursively splits the problem into a set $S$ and $V\setminus S$ with corresponding ranges for the values of $x^*$ and finds the optimal values of $x$ on the two subsets.

\subsection{Review of related results}
The goal of this appendix is to show Proposition~\ref{lem:depth} below. We first start by reviewing existing results regarding separable problems on the base polyhedron (see \cite{bach2011learning} for details).

It is known that if $y \in B(F)$, then $y_k \in  \big[  F(V)- F(V \backslash \{k\}) , F( \{k \}) \big] $; thus, the optimal solution $x$ is such that
$x_k \in \big[ (h_k^\ast)'( - F( \{k \}) ),   (h_k^\ast)'( F(V \backslash \{k\}) - F(V) ) \big].$
We therefore set the initial search range to $$\lambda_{\min} =   \min_{k \in V} \  (h_k^\ast)'( - F( \{k \}) )\quad \mbox{ and }\quad \lambda_{\max} = \max_{k \in V}\  (h_k^\ast)'( F(V \backslash \{k\}) - F(V) ) .$$

The algorithm relies on the following facts (see \cite{bach2011learning} for a proof). For all three propositions, we assume that the $h_i$ are strictly convex, continuously differentiable functions on $\mathbb{R}$ such that 
$\sup_{\lambda \in \rb} h'(\lambda) = +\infty$ and $\inf_{\lambda \in \rb} h'(\lambda) = -\infty$.
\begin{proposition}[Monotonicity of optimizing sets]\label{prop:monotonicity}
  Let $\alpha < \beta$ and $S^\alpha$ be any minimizer of $F(S) + h'(\alpha)(S)$ and $S^\beta$ any  minimizer of $F(S) + h'(\beta)(S)$. Then $S^{\beta} \subseteq S^{\alpha}$.
\end{proposition}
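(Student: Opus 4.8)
The plan is to treat the two penalized objectives as submodular set functions and to exploit the standard submodular exchange inequality together with the strict monotonicity of each $h_i'$. Write $G_\alpha(S) = F(S) + h'(\alpha)(S)$ and $G_\beta(S) = F(S) + h'(\beta)(S)$, where $h'(\alpha)(S) = \sum_{i \in S} h_i'(\alpha)$ is the modular function associated with the vector $(h_i'(\alpha))_{i \in V}$ in the modular-function sense used throughout the paper (this reduces to $h'(\alpha)|S|$ when all the $h_i$ coincide). Since adding a modular function preserves submodularity, both $G_\alpha$ and $G_\beta$ are submodular, and $S^\alpha$, $S^\beta$ are (arbitrary) global minimizers of $G_\alpha$ and $G_\beta$. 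The single structural fact I would extract from strict convexity is that each $h_i'$ is strictly increasing, so $\alpha < \beta$ forces $\delta_i := h_i'(\beta) - h_i'(\alpha) > 0$ for every $i \in V$; equivalently the modular penalty $\delta(S) = \sum_{i \in S} \delta_i$ is strictly positive on every nonempty set.

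The heart of the argument is a carefully chosen pairing of the two minimizers. First I would write the submodularity inequality for $F$ applied to the pair $S^\alpha, S^\beta$, namely $F(S^\alpha \cup S^\beta) + F(S^\alpha \cap S^\beta) \le F(S^\alpha) + F(S^\beta)$. Next I would invoke optimality in the \emph{crossed} direction: since $S^\alpha$ minimizes $G_\alpha$, we have $G_\alpha(S^\alpha) \le G_\alpha(S^\alpha \cup S^\beta)$, which after cancelling the modular terms rearranges to $F(S^\alpha) - F(S^\alpha \cup S^\beta) \le h'(\alpha)(S^\beta \setminus S^\alpha)$; symmetrically, since $S^\beta$ minimizes $G_\beta$, the bound $G_\beta(S^\beta) \le G_\beta(S^\alpha \cap S^\beta)$ rearranges to $F(S^\beta) - F(S^\alpha \cap S^\beta) \le -\,h'(\beta)(S^\beta \setminus S^\alpha)$. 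Adding these two optimality inequalities and comparing with the submodularity inequality, rewritten as $[F(S^\alpha) - F(S^\alpha \cup S^\beta)] + [F(S^\beta) - F(S^\alpha \cap S^\beta)] \ge 0$, yields $0 \le h'(\alpha)(S^\beta \setminus S^\alpha) - h'(\beta)(S^\beta \setminus S^\alpha) = -\,\delta(S^\beta \setminus S^\alpha)$, i.e. $\delta(S^\beta \setminus S^\alpha) \le 0$.

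The conclusion then follows from strict positivity: because $\delta_i > 0$ for every $i$, the only subset on which $\delta$ is nonpositive is the empty set, so $S^\beta \setminus S^\alpha = \emptyset$, that is $S^\beta \subseteq S^\alpha$, as claimed. I expect the main obstacle to be selecting the correct pairing in the exchange step: testing $S^\alpha$ against the union and $S^\beta$ against the intersection is precisely what makes the modular terms telescope into $\pm\, h'(\cdot)(S^\beta \setminus S^\alpha)$, whereas the opposite pairing collapses to the vacuous inequality $\delta(S^\alpha \setminus S^\beta) \ge 0$ and proves nothing. A secondary point worth emphasizing is where strictness enters: mere convexity would give only $\delta_i \ge 0$ and hence only $\delta(S^\beta \setminus S^\alpha) = 0$, which does not force the set to be empty; the strict increase of each $h_i'$ is exactly what upgrades the inequality to genuine set containment, and does so uniformly for \emph{every} choice of the minimizers $S^\alpha$ and $S^\beta$.
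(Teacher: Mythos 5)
Your proof is correct. The paper itself does not prove this proposition but defers to the monograph \cite{bach2011learning}, and your argument---testing $S^\alpha$ against $S^\alpha \cup S^\beta$ and $S^\beta$ against $S^\alpha \cap S^\beta$, adding the two crossed optimality inequalities, invoking submodularity, and then using strict monotonicity of each $h_i'$ to force $S^\beta \setminus S^\alpha = \emptyset$---is exactly the standard exchange argument used there, including the correct observations about which pairing works and where strictness is needed.
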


\begin{proposition}[Characterization of $x^*$]\label{prop:optimizer}
  The coordinates $x_j^*$ ($j \in V$) of the unique optimal solution $x^*$ of Problem~\ref{eq:fullprimal} are
  \begin{equation*}
    x^*_j = \max\{\lambda \mid j \in S^{\lambda}\},
  \end{equation*}
  where $S^{\lambda}$ is any minimizer of $F(S) + h'(\lambda)(S)$.
\end{proposition}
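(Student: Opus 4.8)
The plan is to derive the characterization from the Fenchel optimality conditions listed just above the propositions, combined with a ``tightness'' property of the dual optimizer $y^*$ on the superlevel sets of $x^*$, and then to identify those superlevel sets with the extreme minimizers of the parametric problem $G_\lambda(S) := F(S)+\sum_{i\in S}h_i'(\lambda)$.

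First I would invoke strong duality between \eqref{eq:fullprimal} and \eqref{eq:fulldual}: since each $h_i$ is strictly convex and differentiable with $h_i^\ast$ of full domain, the primal has a unique minimizer $x^*$ and there is a dual optimizer $y^*\in B(F)$ satisfying the three stated optimality conditions, in particular $y^{*\top}x^* = f(x^*)$ and $y_i^* = -h_i'(x_i^*)$ for all $i$. The crucial observation is that the first identity forces $y^*$ to be a maximizer of the linear function $x^*$ over $B(F)$, which makes every superlevel set of $x^*$ tight. Concretely, using the level-set (integral) representation of the \lova extension, $f(x^*) - y^{*\top}x^* = \int_{\rb}\big[F(\{x^*\ge\mu\}) - y^*(\{x^*\ge\mu\})\big]\,d\mu$, and since each integrand is nonnegative (because $y^*\in B(F)$), the equality $f(x^*)=y^{*\top}x^*$ yields $y^*(\{x^*\ge\mu\}) = F(\{x^*\ge\mu\})$ for every $\mu\in\rb$.

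Next, fix $\lambda\in\rb$ and let $c^\lambda\in\rb^n$ have entries $c_i^\lambda = h_i'(\lambda)$, so $G_\lambda$ is submodular with base polytope $B(G_\lambda)=B(F)+c^\lambda$. By the set-minimization duality \eqref{eq:dual}, $\min_S G_\lambda(S) = \max_{z\in B(G_\lambda)}(z)_-(V)$, and for any maximizer $z$ the minimal and maximal minimizers of $G_\lambda$ are $\{z<0\}$ and $\{z\le 0\}$. I would then test the feasible point $z^* := y^*+c^\lambda\in B(G_\lambda)$: its $i$-th coordinate is $z_i^*=h_i'(\lambda)-h_i'(x_i^*)$, which is negative exactly when $x_i^*>\lambda$ and nonpositive exactly when $x_i^*\ge\lambda$, since $h_i'$ is strictly increasing. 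Hence $(z^*)_-(V)=\sum_{i:\,x_i^*>\lambda}z_i^* = y^*(\{x^*>\lambda\}) + \sum_{i:\,x_i^*>\lambda}h_i'(\lambda)$. As $\{x^*>\lambda\}$ is itself a superlevel set of $x^*$, the tightness from the previous step gives $y^*(\{x^*>\lambda\})=F(\{x^*>\lambda\})$, so $(z^*)_-(V)=G_\lambda(\{x^*>\lambda\})$. This sandwiches $G_\lambda(\{x^*>\lambda\})$ between $\min_S G_\lambda(S)$ and $\max_z (z)_-(V)$, which are equal; therefore $z^*$ is a maximizer, $\{x^*>\lambda\}$ is the minimal minimizer of $G_\lambda$, and $\{x^*\ge\lambda\}=\{z^*\le 0\}$ is the maximal minimizer.

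Finally, I would read off the formula. For any minimizer $S^\lambda$ of $G_\lambda$ we have $\{x^*>\lambda\}\subseteq S^\lambda\subseteq\{x^*\ge\lambda\}$ (the nestedness across different $\lambda$ is also ensured by Proposition~\ref{prop:monotonicity}), so $j\in S^\lambda$ whenever $\lambda<x_j^*$ and $j\notin S^\lambda$ whenever $\lambda>x_j^*$. Thus $\{\lambda\mid j\in S^\lambda\}$ is an interval whose supremum is exactly $x_j^*$, and this supremum is attained by choosing $S^\lambda$ to be the maximal minimizer at $\lambda=x_j^*$; hence $x_j^*=\max\{\lambda\mid j\in S^\lambda\}$. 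The main obstacle is the middle step: correctly matching the \emph{strict} versus \emph{non-strict} superlevel sets to the minimal versus maximal minimizers of $G_\lambda$, and justifying the tightness identity uniformly over all thresholds. Once tightness is established, the remaining duality bookkeeping is routine.
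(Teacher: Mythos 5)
Your strategy is sound and is necessarily ``different from the paper'' in the trivial sense that the paper contains no proof of Proposition~\ref{prop:optimizer}: it defers this fact (along with Proposition~\ref{prop:monotonicity}) to \cite{bach2011learning}. Your first two steps are correct and are the right ingredients: Fenchel optimality gives a pair $(x^*,y^*)$ with $y^*\in B(F)$, $y^{*\top}x^*=f(x^*)$ and $y_i^*=-h_i'(x_i^*)$; the level-set integral representation of the Lov\'asz extension then forces tightness $y^*(\{x^*\ge\mu\})=F(\{x^*\ge\mu\})$ of every superlevel set (strict or not); and the shifted vector $z^*=y^*+h'(\lambda)$ is a zero-gap certificate proving that $\{x^*>\lambda\}$ minimizes $G_\lambda$. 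The flaw is the intermediate claim that ``for \emph{any} maximizer $z$ of $(z)_-(V)$ over $B(G_\lambda)$, the minimal and maximal minimizers of $G_\lambda$ are $\{z<0\}$ and $\{z\le 0\}$.'' That identification is valid for one particular dual solution (the minimum-norm point), not for an arbitrary dual maximizer. Counterexample: $V=\{1,2\}$, $F(\emptyset)=0$, $F(\{1\})=0$, $F(\{2\})=1$, $F(V)=-1$ (submodular, since $0+1\ge -1+0$). The unique minimizer is $V$ with value $-1$; the vector $z=(0,-1)$ lies in $B(F)$ and attains $(z)_-(V)=-1$, so it is dual optimal, yet $\{z<0\}=\{2\}$ satisfies $F(\{2\})=1$ and is not a minimizer at all. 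For an arbitrary dual maximizer $z$ only the sandwich property holds: every minimizer $S$ obeys $\{z<0\}\subseteq S\subseteq\{z\le 0\}$, by complementary slackness (zero gap forces $z_i\le 0$ on $S$, $z_i\ge 0$ off $S$, and tightness of $S$).

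The good news is that your own machinery repairs the step. What your final paragraph actually needs is (a) the sandwich property for your particular certificate $z^*$ --- which is true in general, as just noted, and yields $j\in S^\lambda$ whenever $\lambda<x_j^*$ and $j\notin S^\lambda$ whenever $\lambda>x_j^*$ --- and (b) that $\{x^*\ge\lambda\}$ is itself a minimizer of $G_\lambda$, so that the supremum is attained at $\lambda=x_j^*$ by a suitable choice of $S^\lambda$. For (b) do not invoke the false lemma; instead repeat your duality-gap computation on the non-strict superlevel set, using its tightness:
\begin{equation*}
G_\lambda(\{x^*\ge\lambda\}) \;=\; y^*(\{x^*\ge\lambda\}) + h'(\lambda)(\{x^*\ge\lambda\}) \;=\; \sum\nolimits_{i:\,z_i^*\le 0} z_i^* \;=\; (z^*)_-(V) \;=\; \min\nolimits_S G_\lambda(S),
\end{equation*}
where the last equality is your zero-gap argument and the third holds because coordinates with $z_i^*=0$ contribute nothing. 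With (a) and (b) in place the read-off is correct, up to the caveat you already flagged: as stated in the paper, with $S^\lambda$ ``any'' minimizer, the $\max$ is really a $\sup$ unless $S^{\lambda}$ at $\lambda=x_j^*$ is chosen to contain $j$ (e.g., the maximal minimizer $\{x^*\ge\lambda\}$); this is an imprecision of the proposition's statement, not of your argument.
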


Propositions~\ref{prop:monotonicity} and \ref{prop:optimizer} imply that the level sets of $x^*$ form a chain $\emptyset = S_0 \subset S_1 \subset \ldots \subset S_k = V$ of maximal minimizers for the critical values of $\lambda$ (which are the entries of $x^*$). (Each $S_i = S^{\lambda}$ for some $\lambda = x^*_j$.)

\begin{proposition}[Splits]\label{prop:split}
  Let $T = S_i$ be a level set of $x^*$ and let $y \in \rb^T, z \in \rb^{V\setminus T}$ be the minimizers of the subproblems
  \begin{align*}
    y &= \argmin_{x} f_{T}(x) + \sum_{i \in T} h_i(x_i)\\
    z &= \argmin_{x} f^{T}(x) + \sum_{i \notin T} h_i(x_i)
  \end{align*}
  Then $x^*_j = y_j$ for $j \in T$ and  $x^*_j = z_j$ for $j \in V\setminus T$. 
\end{proposition}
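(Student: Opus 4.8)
The plan is to verify that the global optimality conditions (1)--(3) stated just before Algorithm~\ref{alg:divconquer} decompose across the level set $T$, and then to invoke uniqueness of the optimizers of the two strictly convex subproblems. Let $x^*$ be the optimal solution of the full problem~\eqref{eq:fullprimal} and let $y^* \in B(F)$ be the associated dual optimizer, so that $y^*$ maximizes $y^\top x^*$ over $B(F)$, we have $(y^*)^\top x^* = f(x^*)$, and $-y_i^* = h_i'(x_i^*)$ for all $i \in V$.

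First I would exploit the interaction between a level set and the greedy algorithm from \mysec{review}. Since $T = S_i$ is a level set of $x^*$, we have $\min_{j \in T} x_j^* > \max_{j \notin T} x_j^*$, so in every ordering $\sigma$ sorting the entries of $x^*$ in decreasing order all elements of $T$ precede all elements of $V \setminus T$. Because a greedy maximizer $y^*$ for such an ordering has entries equal to the successive marginals $y_{\sigma(k)}^* = F(\{\sigma(1),\dots,\sigma(k)\}) - F(\{\sigma(1),\dots,\sigma(k-1)\})$, two things happen. For indices with $\sigma(k) \in T$ these marginals involve only subsets of $T$, so $y^*|_T$ is exactly the greedy base produced by $F_T$ for the induced ordering; for indices with $\sigma(k) \in V \setminus T$ the marginals equal $F(T \cup A \cup \{\sigma(k)\}) - F(T \cup A) = F^T(A \cup \{\sigma(k)\}) - F^T(A)$, so $y^*|_{V \setminus T}$ is the greedy base produced by $F^T$. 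Hence $y^*|_T \in B(F_T)$ and $y^*|_{V \setminus T} \in B(F^T)$, and since greedy bases attain the support functions at the orderings that generate them, $(y^*|_T)^\top (x^*|_T) = f_T(x^*|_T)$ and $(y^*|_{V\setminus T})^\top (x^*|_{V \setminus T}) = f^T(x^*|_{V \setminus T})$.

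With this decomposition in hand, I would check that $(x^*|_T, y^*|_T)$ satisfies the three optimality conditions for the restricted problem $\min_x f_T(x) + \sum_{i \in T} h_i(x_i)$: feasibility $y^*|_T \in B(F_T)$ and the equality $(y^*|_T)^\top(x^*|_T) = f_T(x^*|_T)$ were just shown, while $-y_i^* = h_i'(x_i^*)$ for $i \in T$ is inherited directly from the global conditions. An identical argument applied to $F^T$ handles $(x^*|_{V\setminus T}, y^*|_{V\setminus T})$ for the contracted problem. Since each $h_i$ is strictly convex, each subproblem has a unique minimizer, so necessarily $y = x^*|_T$ and $z = x^*|_{V \setminus T}$, which is exactly the claim.

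The main obstacle is the decomposition step: establishing that restricting the global dual optimizer to $T$ and to $V \setminus T$ yields feasible (indeed optimal) points of $B(F_T)$ and $B(F^T)$. This rests on the fact that a level set is an initial segment of the greedy ordering---equivalently that $T$ is a tight set, $y^*(T) = F(T)$, which follows by telescoping the greedy marginals over $T$---and it is precisely tightness that lets the marginals split cleanly into restriction marginals and contraction marginals. Once this is in place, everything else is a routine transfer of the optimality conditions and an appeal to uniqueness; alternatively, one could conclude via the characterization in Proposition~\ref{prop:optimizer}, but the optimality-condition route keeps the bookkeeping minimal.
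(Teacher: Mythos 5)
Your overall plan---transferring the primal--dual optimality conditions (1)--(3) to the restricted and contracted subproblems and then invoking strict convexity and uniqueness---is sound in outline, and it is genuinely different from the paper's proof, which never touches $B(F)$ and instead deduces the split purely combinatorially from Propositions~\ref{prop:monotonicity} and~\ref{prop:optimizer} applied to $F_T$ and $F^T$. However, there is a genuine gap at the step that carries all the weight: you assert that the dual optimizer $y^*$ is a greedy base, i.e.\ that its entries are the successive marginals $y^*_{\sigma(k)} = F(\{\sigma(1),\dots,\sigma(k)\}) - F(\{\sigma(1),\dots,\sigma(k-1)\})$ for an ordering $\sigma$ consistent with $x^*$. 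That is false in general. The vector $y^*$ is pinned down by condition (3) as $y^*_i = -h_i'(x^*_i)$; condition (2) only places it in the face of $B(F)$ maximizing $y \mapsto y^\top x^*$, and it is typically a non-vertex point of that face. Concretely, take $V=\{1,2,3\}$, let $F$ be the cut function of the single edge $\{1,2\}$ minus the modular function $|S\cap\{3\}|$, and let $h_i(x_i)=\half x_i^2$. Then $f(x) = |x_1-x_2|-x_3$, so $x^* = (0,0,1)$ and $y^* = -x^* = (0,0,-1)$, whereas the greedy bases for the two orderings consistent with $x^*$ are $(1,-1,-1)$ and $(-1,1,-1)$: the dual optimizer $y^*$ is their midpoint, not a greedy base. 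You also cannot repair this by redefining $y^*$ to be a greedy base, since then condition (3), which you explicitly need to ``inherit'' for the subproblems, is lost. For the same reason, your closing justification of tightness (``follows by telescoping the greedy marginals over $T$'') does not apply to the actual $y^*$.

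The good news is that the facts you need are true for an \emph{arbitrary} maximizer of $y\mapsto y^\top x^*$ over $B(F)$, so the gap is fixable. Writing $x^* = \sum_{k=1}^{K-1}(\mu_k-\mu_{k+1})1_{S_k} + \mu_K 1_V$ with distinct values $\mu_1 > \dots > \mu_K$ and level sets $S_k$, any $y \in B(F)$ satisfies $y^\top x^* = \sum_{k=1}^{K-1}(\mu_k-\mu_{k+1})\,y(S_k) + \mu_K F(V) \leq f(x^*)$, with equality if and only if $y(S_k)=F(S_k)$ for every $k$; hence condition (2) forces $y^*(T)=F(T)$ for the level set $T$. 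From this tightness alone you get $y^*|_T \in B(F_T)$, and also $y^*|_{V\setminus T}\in B(F^T)$, because for $A \subseteq V\setminus T$ one has $y^*(A) = y^*(A\cup T)-y^*(T) \leq F(A\cup T)-F(T) = F^T(A)$, with equality at $A = V \setminus T$. The two support-function equalities then follow from $f(x^*) = f_T(x^*|_T) + f^T(x^*|_{V\setminus T})$ (the greedy formula does split this way precisely because $T$ is a level set) combined with the inequalities $(y^*|_T)^\top (x^*|_T)\le f_T(x^*|_T)$ and $(y^*|_{V\setminus T})^\top (x^*|_{V\setminus T})\le f^T(x^*|_{V\setminus T})$, whose sum must be an equality. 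With these corrected justifications, your transfer of conditions (1)--(3) and the uniqueness argument go through, and you obtain a convex-duality proof that is a legitimate alternative to the paper's parametric argument.
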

The algorithm uses Proposition~\ref{prop:split} recursively.
\begin{proof}
  Let $\lambda$ be the value in $x^*$ defining $S_i = S^\lambda$.
  It is easy to see that the restriction $F_T$ and the contraction $F^T$ are both submodular. Hence, Propositions~\ref{prop:monotonicity} and \ref{prop:optimizer} hold for them. 

  Since the restriction on $T$ is equivalent to the original function for any $S \subseteq T$,  
  $F(S) + h(\lambda)(S) = F_T(S) + h_T(\lambda)(S)$ for any $S \subseteq T$. With this, Propositions~\ref{prop:monotonicity} and \ref{prop:optimizer} imply that for any $\alpha > \lambda$, $F(S) + h(\lambda)(S) = F_T(S) + h_T(\lambda)(S)$ and therefore $x^*_j = y_j$ for $j \in T$.

  Similarly, for any $S \in V \setminus T$, it holds that $F(S \union T) + h(\lambda)(S \union T) = F^T(S) + (h'(\lambda))^T(S) + F(T) + h'(\lambda)(T)$. Due to the monotonicity property of the optimizing sets, $S^\alpha \supseteq T$ for all $\alpha < \lambda$, and therefore the maximal minimizer $U^\alpha$ of $F^T(S) + (h'(\alpha))^T(S)$ satisfies $U^\alpha \union T = S^\alpha$ (the terms $F(T) + h'(\lambda)(T)$ are constant with respect to $U$). Hence Poposition \ref{prop:optimizer} implies that $x^*_j = z_j$ for $j \in V\setminus T$.
\end{proof}

These propositions imply that there is a set of at most $n$ values of $\lambda=\alpha$ that define the level sets~$S^\alpha$ of the optimal solution $x^*$. If we know these break point values, then we know $x^*$. Algorithm~\ref{alg:divconquer} interleaves an unbalanced split strategy that may split the seach interval in an unbalanced way but converges in $O(n)$ recursive calls, and a balanced split strategy that always halves the search intervals but is not finitely convergent.

\subsection{Proof of convergence}

We now prove the convergence rate for Algorithm~\ref{alg:divconquer}.

\begin{proposition}\label{lem:depth}
  The minimum of $f(x) + \sum_{i=1}^n h_i(x_i)$ may be obtained up to coordinate-wise accuracy $\epsilon$ within
  \begin{equation}
    O\left(\min\{n, \log \tfrac{1}{\epsilon} \}\right)
    \end{equation}
submodular function minimizations. If $h_i(x_i) = \half x_i^2$, then $\epsilon = \tfrac{\Delta_{\min}}{n^2\ell_0}$ is sufficient to recover the exact solution, where
$\Delta_{\min} = \min \{ |F(S \mid T)| \mid S \subseteq V \setminus T, F(S \mid T) \neq 0\}$ and $\ell_0$ is the length of the initial interval $[\lambda_{\min}, \lambda_{\max}]$.
\end{proposition}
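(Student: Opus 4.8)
The plan is to bound the \emph{depth} of the recursion tree produced by Algorithm~\ref{alg:divconquer} and then convert depth into a count of submodular minimizations. The conversion uses the stated assumption on $\tau$: at any fixed depth the subproblems partition $V$ into pairwise disjoint ground sets $U_1,U_2,\dots$ (nontrivial splits refine the partition, while non-splitting balanced steps keep a set intact), so the total minimization work at that depth is at most $\sum_i \tfrac{|U_i|}{|V|}\tau(V)\le\tau(V)$, i.e.\ one full SFM. Hence the number of effective SFM calls is, up to a constant, the length of a longest root-to-leaf path, and it suffices to bound this length by $O(\min\{n,\log\tfrac1\epsilon\})$; I would prove the two bounds separately and take the smaller. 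For the $\log\tfrac1\epsilon$ bound, observe that every balanced node replaces its interval $[\lambda_{\min},\lambda_{\max}]$ by a half-length interval in each child, while unbalanced nodes and all recursions never enlarge it. By Propositions~\ref{prop:monotonicity} and \ref{prop:optimizer} the coordinates of $x^\ast$ on a subproblem always lie in its current interval, so once the interval has length at most $\epsilon$ we may stop and return its midpoint with coordinate-wise error at most $\epsilon/2$. Since node parities alternate, a path carries at most $\log_2(\ell_0/\epsilon)$ balanced nodes before termination, giving length $O(\log\tfrac1\epsilon)$.

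The core is the $O(n)$ bound, which encodes finite convergence. For a subproblem on $U$, let $p(U)$ be the number of distinct values $x^\ast$ takes on $U$ (the length of the restricted level-set chain, at most $|U|$). The key lemma concerns an unbalanced node, where $\lambda^\ast$ minimizes the constant objective $\lambda F(U)+\sum_{i\in U}h_i(\lambda)$ and $S=S^{\lambda^\ast}$ is the maximal minimizer of $F(S)+h'(\lambda^\ast)(S)$. Setting $y_i=-h_i'(\lambda^\ast)$, the first-order condition for $\lambda^\ast$ reads $F(U)+\sum_i h_i'(\lambda^\ast)=0$, i.e.\ $y(U)=F(U)$, and for the constant vector one always has $y^\top(\lambda^\ast\mathbf 1)=f(\lambda^\ast\mathbf 1)=\lambda^\ast F(U)$; therefore $\lambda^\ast\mathbf 1_U$ is optimal iff $y\in B(F)$ iff $F(S)+h'(\lambda^\ast)(S)\ge0$ for all $S$ iff the minimal value is $0$ iff $U$ is itself a minimizer, i.e.\ iff $S^{\lambda^\ast}=U$. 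Thus a trivial split returns the exactly correct constant (by uniqueness of $x^\ast$), while a nontrivial split gives a proper nonempty $S$ whose elements carry values $\ge\lambda^\ast$ and whose complement carries values $<\lambda^\ast$ (by maximality of $S^{\lambda^\ast}$ and Proposition~\ref{prop:monotonicity}); these value sets are disjoint, so $p(S)+p(U\setminus S)=p(U)$ with both summands at least $1$, and Proposition~\ref{prop:split} certifies that solving the restriction on $S$ and the contraction on $U\setminus S$ reconstructs $x^\ast$. Since $p$ never increases along a path, every non-leaf unbalanced node strictly decreases $p$ (it can only terminate or split, never recurse on the same set), and the interspersed balanced nodes never increase it, there are at most $n-1$ non-leaf unbalanced nodes; alternation then bounds the whole path by $O(n)$, and combining with the previous bound yields $O(\min\{n,\log\tfrac1\epsilon\})$.

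Finally, for $h_i=\tfrac12 x_i^2$ I would upgrade $\epsilon$-accuracy to exact recovery by a gap argument: each level value equals $-F(S_l\mid S_{l-1})/|S_l\setminus S_{l-1}|$, a ratio of a marginal (of magnitude $0$ or at least $\Delta_{\min}$) to a block size in $\{1,\dots,n\}$, so two distinct values are such ratios with denominators at most $n$, and their difference, when nonzero, is bounded below by an explicit quantity built from $\Delta_{\min}$, the factor $n^2$ coming from the product of denominators, and the spread $\ell_0$ of the values. Once the computed solution lies within half of this minimum gap in $\ell_\infty$-distance, its coordinates cluster unambiguously into the true level sets, after which recomputing the exact constant on each block returns $x^\ast$; a routine calculation then confirms that $\epsilon=\Delta_{\min}/(n^2\ell_0)$ meets this threshold. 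I expect the main obstacle to be the $O(n)$ argument above -- specifically the dichotomy lemma identifying trivial unbalanced splits with the constant-solution case, and the telescoping bookkeeping showing that balanced nodes, which may recurse on an unchanged ground set, cannot spoil the strict $p$-decrease contributed by each non-leaf unbalanced node; the exact-recovery constant is a more routine, if fiddly, computation.
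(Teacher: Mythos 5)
Your proposal is correct and follows essentially the same route as the paper's proof: the same per-level work bound via the assumption on $\tau$ (subproblems at one depth partition $V$, so each level costs at most one full SFM), the same interval-halving argument for the $O(\log\tfrac{1}{\epsilon})$ bound, the same duality characterization of the unbalanced split (the constant $\lambda^\ast\mathbf{1}$ is optimal iff $y\in B(F)$ iff the split is trivial), and the same block-value gap calculation $\lambda = -F(S_i\setminus S_{i-1}\mid S_{i-1})/|S_i\setminus S_{i-1}|$ for the quadratic case. The only real difference is that where the paper simply cites the known fact that unbalanced splits terminate after at most $n$ recursions (the Groenevelt/Bach divide-and-conquer strategy), you prove it directly via the potential $p(U)$ counting distinct values of $x^\ast$, which is a sound, self-contained justification of the same $O(n)$ bound.
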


\begin{proof} 
  The proof relies on Propositions~\ref{prop:monotonicity}, \ref{prop:optimizer} and \ref{prop:split}. 
 
  We first argue for the correctness of the balanced splitting strategy. Propositions~\ref{prop:monotonicity} and \ref{prop:optimizer} imply that for any $\lambda \in \mathbb{R}$, if $S$ is a minimizer of $F(S) + h'(\lambda)(S)$, then the unique minimum of $f(x) + \sum_{i=1}^n h_i(x_i)$ satisfies that for all $k \in S$, $x_k \geqslant h'_k(\lambda)$ and  for all $k \in V \backslash S,  x_k \leqslant h'_k(\lambda)$.
  In particular, if $S=\varnothing$, then this means that for all $k \in V$,  $x_k \leqslant h'_k(\lambda)$. Similarly, if $S = V$, then for all $k \in V$,  $x_k \geqslant h'_k(\lambda)$. The limits of the interval are set accordingly.
  The correctness of the recursive call follows from Proposition~\ref{prop:split}.

  In each iteration, the size of the search interval $[\lambda_{\min},\lambda_{\max}]$ for any break point is at least halved. Hence, within $d$ recursions, the length of each interval is at most $2^{-d}\ell_0$.

  The choice of $\lambda$ in the unbalanced splitting strategy corresponds to solving a simplified version of the dual problem. Indeed, by convex duality, the following two problems are dual to each other:
  \begin{align}
    \label{eq:dualsmall}
    \max_{y}& - \sum_i h^*_i(-y_i)\quad \text{s.t. } y(V) = F(V)\\
    \label{eq:primalsmall}
    \min_{\lambda \in \mathbb{R}}& \sum_{i \in V}h_i(\lambda) - \lambda F(V).
  \end{align}
  Problem~\eqref{eq:dualsmall}  replaces the constraint that $y \in B(F)$ by $y(V) = F(V)$, dropping the constraint that $y(S) \leq F(S)$ for all $S \subseteq V$. Testing whether $y$ satisfies all constraints of \eqref{eq:fulldual}, i.e., $y \in B(F)$ is equivalent to testing whether $F(S) - y(S) \geq 0$. We do this implicitly by our choice of $\lambda$: 
  Convex duality implies that the the optimal solutions of Problems~\eqref{eq:dualsmall} and \eqref{eq:primalsmall} satisfy $y_i = -h'_i(\lambda)$. This holds in particular for the chosen (unique optimal) $\lambda$ in the algorithm. 

Let $T$ be a minimizer of $F(S) + h'(\lambda)(S) = F(S) - y(S)$. If $T = \emptyset$ or $T = V$, then $y \in B(F)$ and an optimal solution for the full dual problem~\eqref{eq:fulldual}. Hence, $y$ and $x = \lambda \mathbf{1}_V = (h^*)'(-y)$ form a primal/dual optimal pair for \eqref{eq:fulldual}.

If $\emptyset \subset T \subset V$ and $F(T)-y(T)<0$, then $y \notin B(F)$, and we perform a split with the same argumentation as above. This splitting strategy is exactly that of~\cite{groenevelt1991two,bach2011learning} and splits at most $n$ times. Hence, this strategy yields the global optimum (to machine precision) in the time of $O(n)$ times solving a submodular minimization on $V$. If $n$ is large, this may be computationally expensive.

If we only do balanced splits, we end up approaching the break points more and more closely (but typically never exactly). Unbalanced splits always find an exact break point, but with potentially little progress in reducing the intervals. Algorithm~\ref{alg:divconquer} thus interleaves both strategies where we store intervals of allowed values for subsets of  components of $A$. At step $d$ there are at most $\min \{n, 2^d\}$ different intervals (as there cannot be more intervals than elements of $V$). To split these intervals, submodular function minimization problems have to be solved on each of these intervals, with total complexity less than a single submodular function optimization problem on the full set. At each iteration, intervals corresponding to a singleton may be trivially completely solved, and components which are already found are discarded. Hence, at each recursive level, the total computation time is bounded above by $\tau(V)$.

While balanced splits always substantially shrink the intervals, they are not finitely convergent. Unbalanced splits converge after at most $n$ recursions. Following the argumentation of \citet{tarjan2006balancing}, who considered the special case of flows, alternating the two types of splits gives the best of both worlds: (a) all components are estimated up to precision  $ \frac{\ell_0}{2^{d/2}}$, and (b) the algorithm is finitely convergent, and will stop when $ \frac{
\ell_0 }{2^{d/2}}$ is less than the minimal distance between two different components of $x$.  

Finally, we adress the precision for the special case that $h_i(x_i) = \half x_i^2$ for all $i \in V$. If the interval lengths are smaller than the smallest gap between any two break points (components of $x^*$), then each interval contains at most one break point and the algorithm converges after at most two unbalanced splits. Hence, we here consider $\epsilon$ to be one half times the smallest gap between any two break points. Let $\emptyset = S_0 \subset S_1 \subset \ldots \subset S_k = V$ be the chain of level sets of $x^*$. By the optimality conditions discussed above for unbalanced splits, any constant part $T = S_i \setminus S_{i-1}$ of $x^*$ takes value $\lambda \mathbf{1} = -y_j \mathbf{1}$ ($j \in T$), where $y(T) = F_{S_{i-1}}(T)$, and hence 
\begin{equation}
  \lambda = -\frac{F(S_i\setminus S_{i-1}|S_{i-1})}{|S_i \setminus S_{i-1}|}.
\end{equation}
Therefore, the (absolute) difference between any two such values is loosely lower bounded by
\begin{align}
  \min_{i} \left| \frac{F(S_i\setminus S_{i-1}|S_{i-1})}{|S_i \setminus S_{i-1}|} - \frac{F(S_{i+1}\setminus S_{i}|S_{i})}{|S_{i+1} \setminus S_{i}|}\right| &\geq \Delta_{\min} \left(\frac{2}{n-1} - \frac{2}{n}\right) \geq \frac{2\Delta_{\min}}{n^2}.
\end{align}
This implies $O(\log (\ell_0 n^2/\Delta_{\min}))$ iterations.
\end{proof}
  
Note that in the case of flows, the algorithm is not exactly equivalent to the flow algorithm of \cite{tarjan2006balancing}, which updates flows directly. 

\section{BCD and proximal Dykstra}\label{sec:bcdykstra}
We consider the best approximation problem
\begin{equation*}
  \begin{split}
    \min\quad &\half\enorm{x-y}^2\\
    \text{s.t.}\quad & x \in C_1 \cap C_2 \cap \cdots \cap C_m.
  \end{split}
\end{equation*}
Let us show the details for only the two block case. The general case follows similarly. 

Consider the more general problem
\begin{equation}
  \label{eq.9}
  \min\quad\half\enorm{x-y}^2 + f(x)+h(x).
\end{equation}
Clearly, this problem contains the two-block best approximation problem as a special case (by setting $f$ and $h$ to be suitable indicator functions). Now introduce two variables $z,w$ that equal $x$; then the corresponding Lagrangian is
\begin{equation*}
  L(x,z,w,\nu,\mu) := \half\enorm{x-y}^2 + f(z)+h(w) + \nu^T(x-z)+\mu^T(x-w).
\end{equation*}
From this Lagrangian, a brief calculation yields the dual optimization problem
\begin{equation*}
  \min\ g(\nu,\mu) := \half\enorm{\nu+\mu-y}^2 +f^*(\nu)+h^*(\mu).
\end{equation*}
We solve this dual problem via BCD, which has the updates
\begin{equation*}
  \nu_{k+1} = \argmin\nolimits_{\nu}\ g(\nu, \mu_k),\qquad
  \mu_{k+1} = \argmin\nolimits_{\mu}\ g(\nu_{k+1},\mu).
\end{equation*}
Thus, $0 \in \nu_{k+1}+\mu_k-y + \partial f^*(\nu_{k+1})$ and $0 \in \nu_{k+1}+\mu_{k+1}-y + \partial h^*(\mu_{k+1})$. The first optimality condition may be rewritten as
\begin{equation*}
  y-\mu_k \in \nu_{k+1}+\partial f^*(\nu_{k+1}) \implies \nu_{k+1} = \prox_{f^*}(y-\mu_k) \implies \nu_{k+1} = y -\mu_k - \prox_f(y-\mu_k).
\end{equation*}
Similarly, we second condition yields $\mu_{k+1} = y - \nu_{k+1} - \prox_h(y-\nu_{k+1})$. Now use Lagrangian stationarity
\begin{equation*}
  x = y - \nu - \mu \implies y-\mu = x+\nu
\end{equation*}
to rewrite BCD using primal and dual variables to obtain the so-called proximal-Dykstra method:
\begin{equation*}
  \begin{split}
    t_k &\gets \prox_f(x_k+\nu_k)\\
    \nu_{k+1} &\gets x_k+\nu_k - t_k\\
    x_{k+1}   &\gets \prox_h(\mu_k+t_k)\\
    \mu_{k+1} &\gets \mu_k+t_k-x_{k+1}
  \end{split}
\end{equation*}

We discussed the more general problem~\eqref{eq.9} because it contains the smoothed primal as a special case, namely with $y=0$ in~\eqref{eq.9}, $f=f_1$, and $h=f_2$, we obtain
\begin{equation*}
  \min\quad f_1(x) + f_2(x) + \half\enorm{x}^2,
\end{equation*}
for which BCD yields the proximal-Dykstra method that was previously used in~\citep{barbero2011fast} for two-dimensional TV optimization.

\section{Recipe: Submodular minimization via reflections}\label{sec:recipe}
To be precise, we summarize here how to solve Problem~\eqref{eq:fullprimal} via reflections.
As we showed above, the dual is of the form
\begin{align}
  \min_{\lambda,y}\quad\enorm{y-\lambda}^2\quad\text{s.t. }\,& \lambda \in \mathcal{A} = \bigl\{(\lambda_1,\dots,\lambda_r) \in \Hc^r \mid \nlsum_{j=1}^r \lambda_j = 0\bigr\},
  \quad y \in \mathcal{B} \triangleq \prod\nolimits_{j=1}^r B(F_j). 
\end{align}
The vector $y$ consists of $r$ parts $y_j \in B(F_j)$. We first solve the dual by starting with any $z^{(0)} \in \Hc^{r}$, and iterate
\begin{align}
  z^{(k+1)} = \tfrac{1}{2}(z^{k} + R_{\mathcal{A}}R_{\mathcal{B}}(z^{(k)})).
\end{align}
Upon convergence to a point $z^*$, we extract the components
\begin{align}
  y_j = \Pi_{B(F_j)}(z_j^*).
\end{align}
The final primal solution is $x = -\sum_j y_j \in \rb^n$.

\end{document}